\def\eqref#1{equation~\ref{#1}}
\def\Eqref#1{Equation~\ref{#1}}
\def\1{\bm{1}}
\DeclareMathAlphabet{\mathsfit}{\encodingdefault}{\sfdefault}{m}{sl}
\SetMathAlphabet{\mathsfit}{bold}{\encodingdefault}{\sfdefault}{bx}{n}
\DeclareMathOperator*{\argmin}{arg\,min}
\definecolor{myorange}{RGB}{2, 142, 2}
\newcommand{\indep}{\rotatebox[origin=c]{90}{$\models$}\ }
\newcommand{\Trans}[1]{{#1}^{\top}}
\newcommand{\Mat}[1]{\mathbf{#1}}
\newcommand{\Space}[1]{\mathbb{#1}}
\newcommand{\Set}[1]{\mathcal{#1}}
\newcommand{\std}[1]{\scriptsize{$\pm$#1}}
\newcommand{\ie}{\textit{i.e., }}
\newcommand{\eg}{\textit{e.g., }}
\newcommand{\ifff}{\textit{iff. }}
\newcommand{\st}{\textit{s.t. }}
\newcommand{\wrt}{\textit{w.r.t. }}
\newcommand{\cf}{\textit{cf. }}
\newcommand{\aka}{\textit{aka. }}
\title{Discovering Invariant Rationales for Graph Neural Networks}
\author{Ying-Xin Wu$^{\dagger}$, Xiang Wang\thanks{Corresponding author.}\ \ $^{\dagger}$, An Zhang$^{\S}$, Xiangnan He$^{\dagger}$, Tat-Seng Chua$^{\S}$ \\
$^{\dagger}$ University of Science and Technology of China\\
$^{\S}$ National University of Singapore\\
\texttt{\{wuyxinsh, xiangwang1223\}@gmail.com,} \\\texttt{an\_zhang@nus.edu.sg, xiangnanhe@gmail.com, dcscts@nus.edu.sg} \\
}
\newcommand{\yx}[1]{{\color{black}{#1}}}
\newcommand{\wx}[1]{{\color{black}{#1}}}
\newcommand{\cm}[1]{{\color{black}{#1}}}
\newcommand{\y}[1]{{\color{black}{#1}}}
\newtheorem{definition}{Definition}
\newtheorem{assumption}{Assumption}
\newtheorem{theorem}{Theorem}
\newtheorem{corollary}{Corollary}
\newenvironment{proof}{\paragraph{Proof:}}{\hfill$\square$}
\begin{document}
\maketitle

\begin{abstract}
    Intrinsic interpretability of graph neural networks (GNNs) is to find a small subset of the input graph's features --- rationale --- which guides the model prediction. Unfortunately, the leading rationalization models often rely on data biases, especially shortcut features, to compose rationales and make predictions without probing the critical and causal patterns. Moreover, such data biases easily change outside the training distribution. As a result, these models suffer from a huge drop in interpretability and predictive performance on out-of-distribution data. In this work, we propose a new strategy of discovering invariant rationale (DIR) to construct intrinsically interpretable GNNs. It conducts interventions on the training distribution to create multiple interventional distributions. Then it approaches the causal rationales that are invariant across different distributions while filtering out the spurious patterns that are unstable. Experiments on both synthetic and real-world datasets validate the superiority of our DIR in terms of interpretability and generalization ability on graph classification over the leading baselines. Code and datasets are available at \href{https://github.com/Wuyxin/DIR-GNN}{https://github.com/Wuyxin/DIR-GNN}.

\end{abstract}
\section{Introduction}

The eye-catching success in graph neural networks (GNNs) \citep{GraphSage,KipfW17,Benchmarking} provokes the rationalization task, \cm{answering} ``What knowledge drives the model to make certain predictions?''.
The goal of selective rationalization (\aka feature attribution) \citep{invariant-rationalization,GNNExplainer,PGExplainer,wang2021towards} is to find a small subset of the input's graph features --- \textit{rationale} --- which best guides or explains the model prediction.
Discovering the rationale in a model \cm{helps} audit its inner workings and justify its predictions.
Moreover, it has tremendous impacts on real-world applications, such as finding functional groups to shed light on protein structure prediction \citep{AlphaFold}.

\begin{wrapfigure}{r}{0.25\textwidth}
    \centering
    \vspace{-15pt}
    \includegraphics[width=0.22\textwidth]{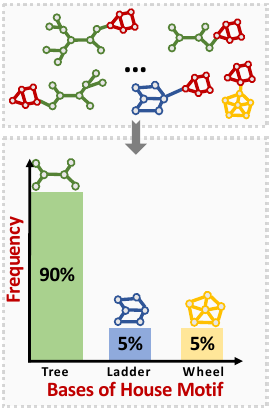}
    \caption{Base Distribution of \emph{House} Motif.}
    \vspace{-10pt}
    \label{fig:intro}
\end{wrapfigure}
Two research lines of rationalization have recently emerged in GNNs.
Post-hoc explainability \citep{GNNExplainer,PGExplainer,SubgraphX,wang2021towards} attributes a model's prediction to the input graph with a separate explanation method, while intrinsic interpretability \citep{GAT,graph-u-nets} incorporates a rationalization module into the model to make transparent predictions.
Here we focus on intrinsically interpretable GNNs.
Among them, graph attention \citep{GAT} and pooling \citep{self-attention-pool,att-gen,graph-u-nets,LEConv} operators prevail, which work as a computational block of a GNN to generate soft or hard masks on the input graph.
They cast the learning paradigm of GNN as minimizing the \cm{empirical} risk with the masked subgraphs, which are regarded as rationales to guide the model predictions.

Despite the appealing nature, recent studies \citep{invariant-rationalization,att-gen} show that the current rationalization methods are prone to exploit data biases as shortcuts to make predictions and compose rationales.
Typically, shortcuts result from confounding factors, sampling biases, and artifacts in the training data. 
Considering Figure~\ref{fig:intro}, when the most bases of \emph{House}-motif graphs are \emph{Tree}, a GNN does not need to learn the correct \cm{function} to reach high accuracy for the motif type.
\cm{Instead,} it is much easier to learn from the statistical shortcuts linking the bases \emph{Tree} with the most occurring motifs \emph{House}.
Unfortunately, when facing with out-of-distribution (OOD) data, such methods generalize poorly since the shortcuts are changed.
Hence, such shortcut-involved rationales hardly reveal the truly critical subgraphs for the predicted labels, being at odds with the true reasoning process that underlies the task of interest \citep{unshuffling} and human cognition \citep{causal-framework-black-box}.

Here we ascribe the failure on OOD data to the inability to identify causal patterns, which are stable to distribution shift.
Motivated by recent studies on invariant learning (IL) \citep{IRM,REx,invariant-rationalization,Invariance}, we premise different distributions elicit different environments of data generating process.
We argue that the causal patterns to the labels remain stable across environments, while the relations between the shortcut patterns and the labels vary.
Such environment-invariant patterns are more plausible and qualified as rationales.


Aiming to identify rationales that capture the environment-invariant causal patterns, we formalize a learning strategy, \underline{D}iscovering \underline{I}nvariant \underline{R}ationales (DIR), for intrinsically interpretable GNNs.
One major \yx{problem} is how to get multiple environments from a standard training set.
Differing from the heterogeneous setting \citep{Invariance} of existing IL methods, where environments are observable and attainable, DIR does not assume prophets about environments.
It instead generates distribution perturbations by causal intervention --- interventional distributions \citep{interventional-distribution,pearl2016causal} --- to instantiate environments and further distinguish the causal and non-causal parts.

Guided by this idea, our DIR strategy consists of four modules: a rationale generator, a distribution intervener, a feature encoder, two classifiers.
Specifically, the rationale generator learns to split the input graph into causal and non-causal subgraphs, which are respectively encoded by the encoder into representations.
Then, the distribution intervener conducts the causal interventions on the non-causal representations to create perturbed distributions, with which we can infer the invariant causal parts.
Then, the two classifiers are respectively built upon the causal and non-causal parts to generate the joint prediction, whose invariant risk is minimized across different distributions.
On one synthetic and three real datasets, extensive experiments demonstrate the generalization ability of DIR to surpass current state-of-the-art IL methods \citep{IRM,REx,group-DRO}, and the interpretability of DIR to outperform the attention- and pooling-based rationalization methods \citep{GAT,graph-u-nets}.
Our main contributions are:
\begin{itemize}[leftmargin=*]
    \vspace{-4pt}
    \item We propose a novel invariant learning algorithm, DIR, for inherent interpretable models, improving the generalization ability and is suitable for any deep models.
    \vspace{-4pt}
    \item We offer causality theoretic analysis to guarantee the preeminence of DIR.
    \vspace{-4pt}
    \item We provide the implementation of DIR for graph classification tasks, which consistently achieves excellent performance on three datasets with various generalization types.
\end{itemize}

\section{Invariant Rationale Discovery}
With a causal look at the data-generating process, we formalize the principle of discovering invariant rationales, which guides our discovery strategy.
Throughout the paper, upper-cased letters like $G$ denote random variables, while lower-case letters like $g$ denote deterministic value of variables.

\subsection{Causal View of Data-Generating Process}
\label{sec:causal-view}

Generating rationales for transparent predictions requires understanding the actual mechanisms of the task \wx{of interest}.
Without loss of generality, we focus on the graph classification task and present a causal view of the data-generating process behind this task.
Here we formalize the causal view as a Structure Causal Model (SCM) \citep{pearl2016causal,pearl2000causality} by inspecting on the causalities among \wx{four} variables: input graph $G$, ground-truth label $Y$, causal part $C$, non-causal part $S$.
Figure~\ref{fig:scm} illustrates the SCM, where each link denotes a causal relationship between two variables.
\begin{itemize}[leftmargin=*]
    \item \vspace{-3pt} $C\rightarrow G\leftarrow S$. The input graph $G$ consists of two disjoint parts: the causal part $C$ and the non-causal part $S$, such as the \emph{House} motif and the \emph{Tree} base in Figure \ref{fig:intro}.
    
    \item \vspace{-3pt}  $C\rightarrow Y$. \wx{By ``causal part'', we mean} $C$ \yx{is} the only endogenous parent to determine the ground-truth label $Y$.
    Taking the motif-base example in Figure~\ref{fig:intro} again,
    $C$ is the oracle rationale, which perfectly explains why the graph is labeled as $Y$.
 
    \item $C\dashleftarrow \dashrightarrow S$. This dashed arrow indicates additional probabilistic dependencies \citep{pearl2000causality,pearl2016causal} between $C$ and $S$. We consider \wx{three} typical relationships here: (1) $C$ is independent of $S$, \ie $C\indep S$; (2) $C$ is the direct cause of $S$, \ie $C\rightarrow S$;
    and (3) There exists a common cause $E$, \ie $C\leftarrow E \rightarrow S$. See Appendix~\ref{appendix:assumption} for the corresponding examples.
\end{itemize}
\vspace{-3pt} 
$C\dashleftarrow \dashrightarrow S$ can create spurious correlations between the non-causal part $S$ and the ground-truth label $Y$.
Assuming $C\rightarrow S$, $C$ is a confounder between $S$ and $Y$, which opens a backdoor path $S\leftarrow C \rightarrow Y$, thus making $S$ and $Y$ spuriously correlated \citep{pearl2016causal}.
\wx{We systematize such spurious correlations as}
$Y\not\!\perp\!\!\!\perp S$. \cm{Wherein, we make feature induction assumption on $S$ to avoid the confusion of the induced subset of $S$ between $C$. See Appendix~\ref{appendix:theory} for the formal assumption.} 
Furthermore, data collected from different environments exhibit various spurious correlations \citep{unshuffling,IRM}, \eg one mostly picks \emph{House} motifs with \emph{Tree} bases as \wx{the training data}, while another selects \emph{House} motifs with \emph{Wheel} bases \wx{as the testing data}.
Hence, such spurious correlations are unstable and variant across different distributions.

\begin{figure}[t]
    \centering
    \begin{subfigure}[b]{0.2\textwidth}
        \centering
        \includegraphics[width=\textwidth]{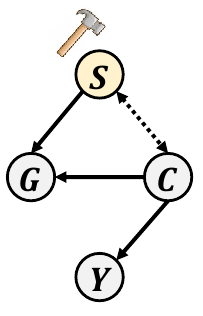}
        \vspace{-10pt}
        \caption{SCM}
        \label{fig:scm}
    \end{subfigure}
    \hfill
    \begin{subfigure}[b]{0.77\textwidth}
        \centering
        \includegraphics[width=\textwidth]{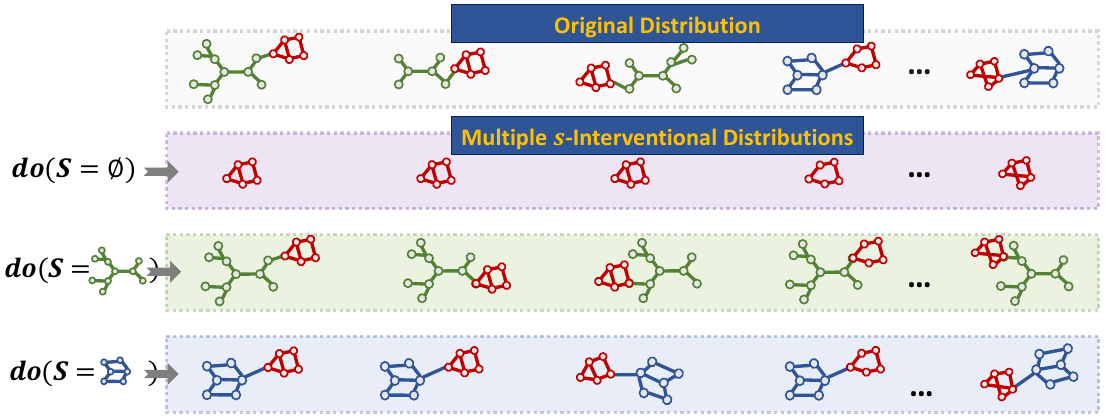}
        \vspace{-10pt}
        \caption{Interventional Distributions.}
        \label{fig:illustrate}
    \end{subfigure}
    \label{fig:1}
    \vspace{-5pt}
    \caption{(a) Causal view of data-generating process; (b) Illustration of interventional distributions.}
    \vspace{-10pt}
\end{figure}

\subsection{Task Formalization of Invariant Rationalization}

\vspace{-3pt} 
\wx{\textbf{Oracle Rationale.}}
\wx{With} the causal theory \citep{pearl2016causal,pearl2000causality}, for each variable $X$ in a SCM, there exists a directed link from each of its parent variables $PA(X)$ to $X$, if and only if the causal mechanism $X=f_{X}(PA(X),\epsilon_{X})$ persists, where $\epsilon_{X}\indep PA(X)$ is the exogenous noise of $X$.
For simplicity, we omit the exogenous noise and simplify it as $X=f_{X}(PA(X))$.
Hence, there exist a function $f_{Y}:C\rightarrow Y$ in our SCM, where the ``oracle rationale'' $C$ satisfies:
\begin{gather}\label{equ:oracle-rationalization}
    Y=f_{Y}(C),\quad Y\indep S\mid C,
\end{gather}
where $Y\indep S\mid C$ indicates that $C$ shields $Y$ from the influence of $S$, making the causal relationship $C\rightarrow Y$ invariant across different $S$. 

\wx{\textbf{Rationalization.}}
In general, only the pairs of input $G$ and label $Y$ are observed during training, while neither oracle rationale $C$ nor oracle structural equation model $f_{Y}$ is available.
The absence of oracles calls for the study on intrinsic interpretability.
We systematize an intrinsically-interpretable GNN as \wx{a combination of two modules}, \ie $h=h_{\hat{Y}}\circ h_{\tilde{C}}$
, where $h_{\tilde{C}}:G\rightarrow \tilde{C}$ discovers rationale $\tilde{C}$ from the observed $G$, and \wx{$h_{\hat{Y}}:\tilde{C}\rightarrow\hat{Y}$ outputs the prediction $\hat{Y}$ to approach $Y$}.
Distinct from $C$ and $Y$ which are the variables in the causal mechanisms, $\tilde{C}$ and $\hat{Y}$ represent the variables in the modeling process to approximate $C$ and $Y$.
To optimize these modules, \wx{most of current intrinsically-interpretable GNNs \citep{GAT,self-attention-pool,att-gen,graph-u-nets,LEConv} adopt the learning strategy of minimizing the empirical risk:}
\begin{gather}\label{equ:current-paradigm}
    \min_{h_{\tilde{C}},h_{\hat{Y}}} \mathcal{R}(h_{\hat{Y}}\circ h_{\tilde{C}}(G),Y),
\end{gather}
where $\mathcal{R}(\cdot,\cdot)$ is the risk function, which can be the cross-entropy loss.
Nevertheless, this \wx{learning strategy} relies heavily on the statistical associations between the input features and labels, and can potentially exhibit non-causal rationales.

\wx{\textbf{Invariant Rationalization.}}
We ascribe the limitation to \wx{ignoring} $Y\indep S \mid C$ in \Eqref{equ:oracle-rationalization}, which is crucial to refine the causal relationship $C\rightarrow Y$ that is invariant across different $S$.
By introducing this independence, we formalize the task of \wx{invariant rationalization} as:
\begin{gather}
    \label{eq:cond}
    \min_{h_{\tilde{C}},h_{\hat{Y}}} \mathcal{R}(h_{\hat{Y}}\circ h_{\tilde{C}}(G),Y),\quad\text{s.t.}~~Y\indep \tilde{S} \mid \tilde{C},
\end{gather}
where $\tilde{S}=G\setminus\tilde{C}$ is the complement of $\tilde{C}$. This formulation encourages the rationale $\tilde{C}$ seeking the \wx{patterns} that are stable across different distributions, while discarding the unstable patterns.

\subsection{\wx{Principle \& Learning Strategy of DIR}}
\textbf{Interventional Distribution.}
However, it is difficult to recover the oracle rationale from the joint distribution over the inputs and labels 
--- that is, the causal and non-causal relations are hardly distinguished from each other.
We \wx{get inspirations from} invariant learning \citep{IRM,REx,invariant-rationalization}, which constructs different environments to infer the invariant features or predictors. 
To obtain the environments, previous studies mostly partition the training set by prior knowledge \citep{unshuffling} or adversarial environment inference \citep{EIIL,caam}.
Different from partitioning the training data, we do not assume prophets about environments but introduce the interventional distribution \citep{interventional-distribution,pearl2016causal} instead to model the DIR task.
Specifically, on the top of our SCM, we generate $s$-interventional distribution by doing intervention $do(S=s)$ on $S$, which removes every link from the parents $PA(S)$ to the variable $S$ and fixes $S$ to the specific value $s$.
By stratifying different values \wx{$\Space{S}=\{s\}$}, we \wx{can obtain multiple $s$-interventional distributions.}

\wx{With} interventional distributions, we propose the principle of \underline{d}iscovering \underline{i}nvariant \underline{r}ationale (DIR) to identify a rationale $\tilde{C}$ whose relationship with the label $Y$ is stable across different distributions.
\begin{definition}[DIR Principle]
    An intrinsically-interpretable model $h$ satisfies the DIR principle if it
    \begin{enumerate}
        \item minimizes all \wx{$s$}-interventional risks: $\Space{E}_{\wx{s}}[\Set{R}(h(G),Y|do(S=s))]$, and simultaneously
        \item minimizes the variance of \wx{various $s$}-interventional risks: Var$_{\wx{s}}(\{\Set{R}(h(G),Y|do(S=s))\})$,
    \end{enumerate}
    \vspace{-3pt} where the $s$-interventional risk is defined over the $s$-interventional distribution \wx{for specific $s\in\Space{S}$}.
\end{definition}
Guided by the proposed principle, we design the \wx{learning strategy} of DIR as:
\vspace{5pt}
\begin{equation}
    \min\Set{R}_{\text{DIR}} = \Space{E}_{\wx{s}}[\Set{R}(h(G),Y|do(S=s))] + \lambda \text{Var}_{\wx{s}}(\{\Set{R}(h(G),Y|do(S=s))\}),
    \label{eq:dir}
    \vspace{5pt}
\end{equation}
where $\mathcal{R}\left(h(G),Y \mid d o(S=s)\right)$ computes the risk under the $s$-interventional distribution, \wx{which we will elaborate in Section \ref{sec:dir-implementation}}. Var$(\cdot)$ calculates the variance of risks over different $s$-interventional distributions; $\lambda$ is a hyper-parameter to control the strength of invariant learning.

\textbf{Justification.}
We theoretically justify the DIR principle's ability to discover invariant rationales.
Specifically, Theorem \ref{theorem:necessity} shows that the oracle model $f_Y$ respects the DIR principle. Moreover, we suggest that $C$ can be inferred by making the intrinsically interpretable model $h$ conform to the DIR principle under the uniqueness condition (\cf Corollary~\ref{corollary:ns}).
We leave the detailed proofs in Appendix \ref{appendix:theory} due to the limited space.
By making the distribution-relevant risks indifferent while pursuing low risks, the DIR principle is able to discover the invariant rationales $\tilde{C}$ as the approximation of the oracle rationales $C$, while encouraging $h_{\hat{Y}}$ approaching the oracle model $f_Y$.


\subsection{DIR-Guided Implementation of Intrinsically-Interpretable GNNs}
\label{sec:dir-implementation}
With the DIR principle and objective, we present how to implement the intrinsically-interpretable GNNs.
\y{We summarize the key notations of this section in Appendix~\ref{appendix:algo} for clarity.}
Following \Eqref{equ:current-paradigm}, a model $h$ with intrinsic interpretability consists of two modules: $h=h_{\hat{Y}}\circ h_{\tilde{C}}$, where $h_{\tilde{C}}$ is to \wx{extract} a possible rationale, and $h_{\hat{Y}}$ is to make prediction based on the rationale.
Moreover, to establish the $s$-interventional distributions, we design an additional module to do the interventions.
In a nutshell, our framework consists of four components, as Figure~\ref{fig:model} shows.

\begin{figure}[t]
    \centering
    \includegraphics[width=0.95\textwidth]{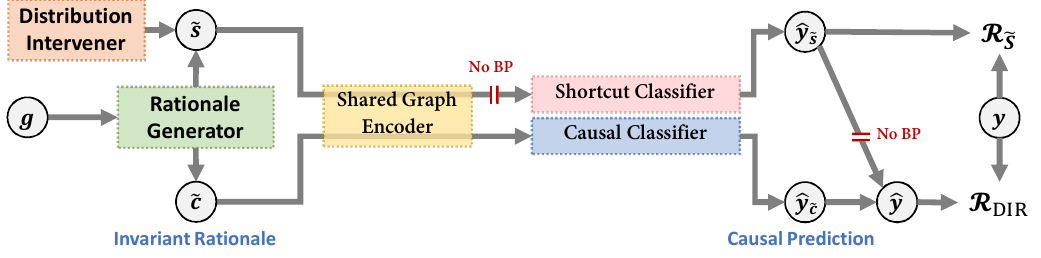}
    \vspace{-5pt}
    \caption{\y{DIR Implementation on GNNs, which includes a rationale generator, a distribution intervener, an encoder and two classifiers. For the inference, we only use $\hat{y}_{\tilde{c}}$ as the prediction.}}
    \vspace{-10pt}
    \label{fig:model}
\end{figure}

\textbf{Rationale Generator.}
It aims to split the input graph instance $g$ into two subgraphs: \cm{causal part} $\tilde{c}$ and \cm{non-causal part} $\tilde{s}$.
Specifically, given an input graph instance $g=(\Set{V},\Set{E})$ with the node set $\Set{V}$ and the edge set $\Set{E}$, its adjacency matrix is $\Mat{A}\in\{0,1\}^{|\Set{V}|\times|\Set{V}|}$, where $\Mat{A}_{ij}=1$ denotes the edge from node $i$ to node $j$, and $\Mat{A}_{ij}=0$ otherwise.
The rationale generator first adopts a GNN to generate the mask matrix $\Mat{M}\in\Space{R}^{|\Set{V}|\times|\Set{V}|}$ on $\Mat{A}$, where mask $\Mat{M}_{ij}$ indicates the importance of edge $\Mat{A}_{ij}$:
\begin{gather}
    \Mat{Z}=\text{GNN}_{1}(g),\quad \Mat{M}_{ij}=\sigma(\Trans{\Mat{Z}}_{i}\Mat{Z}_{j}),
\end{gather}
where $\sigma(\cdot)$ is the sigmoid function and $\Mat{Z}\in\Space{R}^{|\Set{V}|\times d}$ summarizes the $d$-dimensional representations of all nodes.
The generator then selects the edges with the highest masks to construct the rationale $\tilde{c}$ and collects $\tilde{c}$'s complement as $\tilde{s}$, as follows:
\begin{gather}
    \Set{E}_{\cm{\tilde{c}}} = \text{Top}_{r}(\Mat{M}\odot\Mat{A}),\quad \Set{E}_{\cm{\tilde{s}}} = \text{Top}_{1-r}((1-\Mat{M})\odot\Mat{A}),
\end{gather}
where $\Set{E}_{\tilde{c}}$ and $\Set{E}_{\tilde{s}}$ are the edge sets of $\tilde{c}$ and $\tilde{s}$, respectively; Top$_{r}(\cdot)$ selects the top-$K$ edges with $K=r\times|\Set{E}|$, and $r$ is the hyper-parameter (\eg $40\%$); $\odot$ is the element-wise product.
Having obtained the edge sets, we can distill the nodes appearing in the edges to establish $\tilde{c}$ and $\tilde{s}$. 

\textbf{Distribution Intervener.}
It targets at creating interventional distributions.
Formally, it first collects \cm{the non-causal part of all the instances into a memory bank as $\tilde{\Space{S}}$}.
It next samples a memory $\tilde{s}_i\in\tilde{\Space{S}}$ to conduct the intervention $do(S=\tilde{s}_i)$, replacing the complement of the critical subgraph $\tilde{c}_j$ at hand and constructing an intervened pair ($\tilde{c}_j,\tilde{s}_i$), where $i,j$ are indices.

\textbf{Graph Encoder \& Classifiers .}
Here we represent $h_{\hat{Y}}$ \wx{as} a combination of a graph encoder and two classifiers.
Specifically, it employs another GNN encoder on $\tilde{c}$ to generate node representations $\Mat{Z}_{\tilde{c}}\in\Space{R}^{|\Set{V}|\times d}$, and then combines them as graph representation $\Mat{H}_{\tilde{c}}\in\Space{R}^{D}$ via a global pooling operator, \eg average pooling.
Then it uses a classifier \cm{$\Phi_c$} to project the graph representation into a probability distribution over class labels $\hat{y}_{\tilde{c}}$.
More formally, the process is as follows:
\vspace{3pt}
\begin{gather}
    \Mat{Z}_{\tilde{c}}=\text{GNN}_{2}(\tilde{c}),\quad\Mat{H}_{\tilde{c}}=\text{Pooling}(\Mat{Z}_{\tilde{c}}),\quad \hat{y}_{\tilde{c}}=\cm{\Phi_c}(\Mat{H}_{\tilde{c}}).
\end{gather}
\vspace{3pt}
Analogously, we can obtain $\hat{y}_{\tilde{s}}$ for $\tilde{s}$ via the shared encoder and another classifier \cm{$\Phi_s$}.
$\hat{y}_{\tilde{c}}$ is the prediction based merely on the causal part $\tilde{c}$, while $\hat{y}_{\tilde{s}}$ measures the predictive power of the intervened part $\tilde{s}$.
Inspired by \cite{rubi}, we formulate the joint prediction $\hat{y}$ under the intervention $do(S=\tilde{s})$ as $\hat{y}_{\tilde{c}}$ masked by $\hat{y}_{\tilde{s}}$:
\vspace{-3pt}
\begin{gather}
    \hat{y}=\hat{y}_{\tilde{c}}\odot\sigma(\hat{y}_{\tilde{s}}),
\end{gather}
where the sigmoid function adjusts the output logits of $\tilde{c}$ to compensate for the spurious biases.
In Appendix \ref{sec:element-wise}, we present examples of how this operation helps discover the causal part.

\textbf{Optimization.}
Having established the prediction $\hat{y}$ of an instance $g$ under the intervention $do(S=\tilde{s})$, we are capable of getting the $\tilde{s}$-interventional risk similar as \Eqref{eq:dir} as follows:
\begin{gather}
    \Set{R}(h(G),Y|do(S=\tilde{s}))=\Space{E}_{(g,y)\in\Set{O},S=\tilde{s},C=h_{\tilde{C}}(g)}l(\hat{y},y),
\end{gather}
where $(g,y)\in\Set{O}$ is a pair of graph instance $g$ and its ground-truth label $y$ from the training set $\Set{O}$; $l(\cdot)$ denotes the loss function on a single instance.
Moreover, we define the loss for \cm{$\Phi_s$} module as:
\begin{gather}
\mathcal{R}_{\tilde{S}}=\Space{E}_{(g,y)\in\Set{O},\tilde{s}=g/h_{\tilde{C}}(g)}l(\hat{y}_{\tilde{s}}, y)
\end{gather}
Specifically, $\mathcal{R}_{\tilde{S}}$ is only backpropagated to the classifier \cm{$\Phi_s$} and we set apart the other components from its backpropagation to avoid interference with representation learning. Thus, this loss promotes the $\tilde{S}$-only branch to learn spurious biases given the non-causal features only. Overall, we can jointly optimize these components via the DIR objective and shortcut loss, \ie
\vspace{3pt}
\begin{gather}
    \label{eq:branch}
\min {}_{\phi_s} \mathcal{R}_{\tilde{S}}+\min {}_{\gamma,\theta,\phi_c}  \mathcal{R}_{\mathrm{DIR}}.
\end{gather}
\vspace{3pt}
where $\gamma,\theta$ and $(\phi_c,\phi_s)$ are the parameters of the generator, encoder and two classifiers. While in the inference phase, we yield $\tilde{c}$ and $\hat{y}_{\tilde{c}}$ as the causal rationale and the causal prediction of a testing graph $g$, which exclude the influence of the non-causal part $\tilde{s}$.



\section{Experiments}
\label{sec:exp}
In this section, we conduct extensive experiments to answer the research questions:
\begin{itemize}[leftmargin=*]
    \item \vspace{-5pt} \textbf{RQ1: }How effective is DIR in discovering causal features and improving model generalization?
    \item \vspace{-3pt} \textbf{RQ2: }What are the learning patterns and insights of DIR training? Especially, how does invariant rationalization help to improve generalization?
\end{itemize}

\subsection{Settings} 
\label{sec:set}
\textbf{Datasets.}
We use one synthetic dataset and three real datasets of graph classification tasks.
Different GNNs are used in different datasets to achieve DIR and early stopping is exploited during training.
Here we briefly introduce the datasets, while the details of dataset statistics, deployed GNNs, and training process are summarized in Appendix~\ref{appendix:details}.
\begin{itemize}[leftmargin=*]
    \vspace{-5pt}
    \item \textbf{Spurious-Motif} is a synthetic dataset created by following \cite{GNNExplainer}, which involves $18,000$ graphs. Each graph is composed of one base (\emph{Tree}, \emph{Ladder}, \emph{Wheel} denoted by $S=0,1,2$ respectively) and one motif (\emph{Cycle}, \emph{House}, \emph{Crane} denoted by $C=0,1,2$, respectively).
    The ground-truth label $Y$ is determined by $C$ solely. 
    Moreover, we manually construct false relations of different degrees between $S$ and label $Y$ in the training set. Specifically, in the training set, we sample each motif from a uniform distribution, while the distribution of its base is determined by $P(S)=b\times\mathbb{I}(S=C) + \frac{1-b}{2}\times\mathbb{I}(S\neq C)$.
    We manipulate $b$ to create Spurious-Motif datasets of distinct biases. In the testing set, the motifs and bases are randomly attached to each other. Besides, we include graphs with large bases to further magnify the distribution gaps.

    \item \textbf{MNIST-75sp}~\citep{att-gen} converts the MNIST images into $70,000$ superpixel graphs with at most $75$ nodes each graph. The nodes in the graphs are superpixels, while edges are the spatial distance between the nodes. Every graph is labeled as one of 10 classes. Random noises are added to nodes' features in the testing set.
    \item \textbf{Graph-SST2} \citep{Graph-SST2,SST2} Each graph is labeled by its sentence sentiment and consists of nodes representing tokens and edges indicating node relations. Graphs are split into different sets according to their average node degree to create dataset shifts.
    
    \item \textbf{Molhiv} (OGBG-Molhiv)~\citep{hu2020ogb, hu2021ogblsc,MoleculeNet} is a molecular property prediction dataset consisting of molecule graphs, where nodes are atoms, and edges are chemical bonds. Each graph is labeled according to whether a molecule inhibits HIV replication or not.
\end{itemize}

\textbf{Baselines.}
We thoroughly compare DIR with Empirical Risk Minimization (ERM) and two classes of baselines:
\begin{itemize}[leftmargin=*]
    \item \textbf{Interpretable Baselines:} Graph Attention~\citep{GAT} and graph pooling operations including ASAP~\citep{LEConv}, Top-$k$ Pool~\citep{graph-u-nets} and SAG Pool~\citep{self-attention-pool}. We use their generated masks on graph structures as rationales. We also include GSN~\citep{GSN}, a topologically-aware message passing scheme which enriches GNNs with interpretable structural features.
    \item \textbf{Robust/Invariant Learning Baselines:} 
    Group DRO~\citep{group-DRO}, IRM \citep{IRM}, V-REx \citep{REx}. This class of algorithms improves the robustness and generalization for GNNs, which helps the models better generalize in unseen groups or out-of-distribution datasets. We use random groups or partitions during the model training.
\end{itemize}
We also include an ablation model of DIR,  DIR-Var, which sets $\lambda=0$, \ie discards the variance term in $\mathcal{R}_{\text{DIR}}$, to show the effectiveness of the variance regularization in the DIR objective.

\textbf{Metrics.} We use ROC-AUC for Molhiv and ACC for the other three datasets. Moreover, for Spurious-Motif dataset, we use the precision metric to evaluate the coincidence between model rationales and the ground-truth rationales, and validate the interpretability ability quantitatively.

\subsection{Main Results (RQ1)}
\label{sec:main}

\begin{table}[t]
    \centering
    \caption{Performance on the Synthetic Dataset and Real Datasets. In Spurious-Motif dataset, we color brown for the results lower than ERM, where $b$ is the indicator of the confounding effect.}
    \vspace{-10pt}
    \label{tab:main}
    \resizebox{1.0\textwidth}{!}{
    \begin{tabular}{lccccccc}
    \toprule
    & \multicolumn{4}{c}{Spurious-Motif}&\multirow{2}{*}{MNIST-75sp}&\multirow{2}{*}{Graph-SST2}&\multirow{2}{*}{Molhiv}\\
    & Balance& $b=0.5$ & $b=0.7$& $b=0.9$\\
    \midrule
    ERM  &42.99\std{1.93}&39.69\std{1.73}&38.93\std{1.74}&33.61\std{1.02}&12.71\std{1.43}&81.44\std{0.59}&76.20\std{1.14}\\ 
    \arrayrulecolor{gray}\midrule
    Attention &43.07\std{2.55}&\textcolor{brown}{39.42\std{1.50}}&\textcolor{brown}{37.41\std{0.86}}&\textcolor{brown}{33.46\std{0.43}}&15.19\std{2.62}&81.57\std{0.71}&75.84\std{1.33}\\ 
    ASAP&44.44\std{8.19}&44.25\std{6.87}&39.19\std{4.39}  &\textcolor{brown}{31.76\std{2.89}}&15.54\std{1.87}&81.57\std{0.84}&73.81\std{1.17}\\
    Top-$k$ Pool&43.43\std{8.79}&41.21\std{7.05}&40.27\std{7.12} &\textcolor{brown}{33.60\std{0.91}}&14.91\std{3.25}&79.78\std{1.35}&73.01\std{1.65}\\
    SAG Pool&45.23\std{6.76}&43.82\std{6.32}&40.45\std{7.50}&\textcolor{brown}{33.60\std{1.18}}&14.31\std{2.44}&80.24\std{1.72}&73.26\std{0.84}\\ 
    \y{GSN}&43.18\std{5.65}&\textcolor{brown}{34.67\std{1.21}}&\textcolor{brown}{34.03\std{1.69}}&\textcolor{brown}{32.60\std{1.75}}&19.03\std{2.39}&82.54\std{1.16}&74.53\std{1.90}\\ 
    \midrule
    Group DRO&\textcolor{brown}{41.51\std{1.11}}&\textcolor{brown}{39.38\std{0.93}} &39.32\std{2.23} &33.90\std{0.52}&15.13\std{2.83}&81.29\std{1.44}&75.44\std{2.70}\\ 
    V-REx&\textcolor{brown}{42.83\std{1.59}}&\textcolor{brown}{39.43\std{2.69}}&39.08\std{1.56}&34.81\std{2.04}& 18.92\std{1.41} &81.76\std{0.08}&75.62\std{0.79}\\
    IRM&\textcolor{brown}{42.26\std{2.69}}& 41.30\std{1.28}&40.16\std{1.74}&35.12\std{2.71}& 18.62\std{1.22}&81.01\std{1.13}&74.46\std{2.74}\\
    \midrule
    DIR-Var&45.87\std{2.61}&43.81\std{1.93}&42.69\std{1.77}&37.12\std{1.56}&17.74\std{4.17}&81.74\std{0.89}&76.05\std{0.86}\\
    \textbf{DIR} &\textbf{47.03\std{2.46}}&\textbf{45.50\std{2.15}}&\textbf{43.36\std{1.64}}&\textbf{39.87\std{0.56}}&\textbf{20.36\std{1.78}}&\textbf{83.29\std{0.53}}&\textbf{77.05\std{0.57}}\\
    \arrayrulecolor{black}\bottomrule 
    \end{tabular}}
    \vspace{-5pt}
\end{table}

\begin{table}[t]
    \centering
    \caption{Precision@5 on Spurious-Motif.}
    \label{tab:precison}
    \vspace{-10pt}
    \resizebox{0.6\textwidth}{!}{
    \begin{tabular}{lcccc}
    \toprule
    Model&Balance & $b=0.5$& $b=0.7$ &$b=0.9$\\
    \midrule
    Attention &0.183\std{0.018}&0.183\std{0.130}&0.182\std{0.014}&0.134\std{0.013}\\ 
    ASAP&0.187\std{0.030}&0.188\std{0.023}&0.186\std{0.027}&0.121\std{0.021}\\
    Top$k$ Pool&0.215\std{0.061}&0.207\std{0.057}&0.212\std{0.056}  &0.148\std{0.018}\\
    SAG Pool&0.212\std{0.033}&0.198\std{0.062}&0.201\std{0.064}&0.136\std{0.014}\\
    \midrule
    \textbf{DIR} &\textbf{0.257\std{0.014}}&\textbf{0.255\std{0.016}}&\textbf{0.247\std{0.012}}&\textbf{0.192\std{0.044}}\\
    \bottomrule
    \end{tabular}}
    \vspace{-10pt}
\end{table}
To fairly compare the methods, we train each model under the same training settings as described in Appendix~\ref{appendix:details}.
The overall results are summarized Table~\ref{tab:main}, and we have the following observations:
\begin{enumerate}[leftmargin=*]
    \item \textbf{DIR has better generalization ability than the baselines.} DIR outperforms the baselines consistently by a large margin. Specifically, for MNIST-75sp dataset, DIR surpasses ERM by 7.65\% and ASAP by 4.82\%. \y{Although structure features are shown to be helpful in mitigating feature distribution shift, DIR still performs better than GSN.} For Graph-SST2 and Molhiv, DIR achieves the highest performance with low variance.
    For Spurious-Motif, DIR outstrips IRM averagely by 4.23\% and SAG by 3.16\% across different degrees of spurious bias. Such improvements strongly validate that DIR can generalize better in various environments. 
    \item \textbf{DIR is consistently effective under different bias degrees, while the baselines easily fail.} For interpretable baselines, Attention fails to make salient improvements when bias exists, and pooling methods also fall through under severe bias. This is empirically in line with our presumption that GNNs are easily biased to latch on spurious relations or non-causal features and thus generalize poorly in OOD data.
    For robust/invariant learning baselines, IRM underperforms ERM when $b$ is small. 
    This evidence is accordant with the conclusion in \citet{AhujaWDSV21} that IRM is guaranteed to be close to the desired OOD solutions when confounders exist, while it has no obvious advantage to ERM under covariate shift.
    Moreover, Group DRO and V-REx follow a similar pattern. In contrast, DIR works well in various scenarios. We credit such reliability to the rationales discovery from which the causal features $C$ are potentially extracted, and the relation $C\rightarrow Y$ learned by the GNNs is invariant across the distribution changes in the testing set. 

    \item \textbf{Data augmentation by intervention is beneficial while the variance regularization further boosts model performance.} Interestingly, the ablation model DIR-Var has already exceeded some of the baselines.
    We attribute such improvement to data augmentation via interventional distributions.
    On top of DIR-Var, DIR improves the model performance by averagely $1.57\%$ in Spurious-Motif and $2.62\%$ in MNIST-75sp.
    This suggests that the variance regularization demands a stronger invariance condition and is instructive for searching causal features.

    \item\textbf{DIR has better intrinsic interpretability than the baselines.} In Table~\ref{tab:precison}, we report intrinsic interpretable models' performance \wrt Precision@5.
    From the consistent improvements over the baselines, we find DIR has an advantage in discovering causal features. And the performance gap between DIR and the baselines becomes more significant when the bias increases. 
\end{enumerate}



\subsection{In-Depth Study (RQ2)}
\label{sec:depth}

\begin{figure}[t]
    \centering
    \begin{subfigure}[b]{0.48\textwidth}
        \centering
        \includegraphics[width=0.9\textwidth]{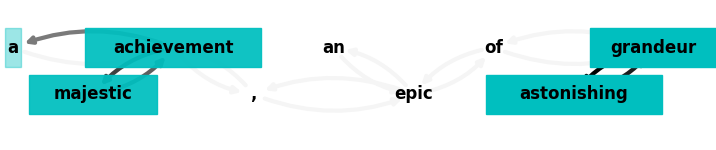}
        \caption{Training rationale: Positive sentiment.}
        \label{fig:train-pos}
    \end{subfigure}
    \hfill
    \begin{subfigure}[b]{0.48\textwidth}
        \centering
        \includegraphics[width=0.95\textwidth]{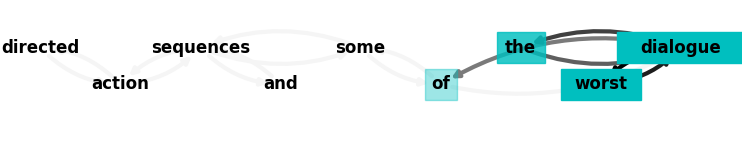}
        \caption{Training rationale: Negative sentiment.}
        \label{fig:train-neg}
    \end{subfigure}
    \begin{subfigure}[b]{0.48\textwidth}
        \centering
        \includegraphics[width=0.8\textwidth]{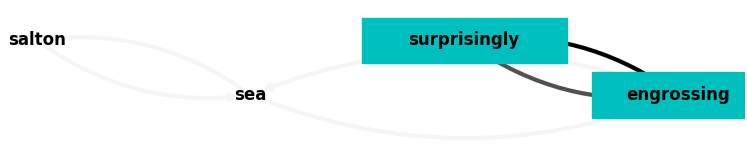}
        \caption{Testing rationale: Positive sentiment.}
        \label{fig:test-pos}
    \end{subfigure}
    \hfill
    \begin{subfigure}[b]{0.48\textwidth}
        \centering
        \includegraphics[width=0.8\textwidth]{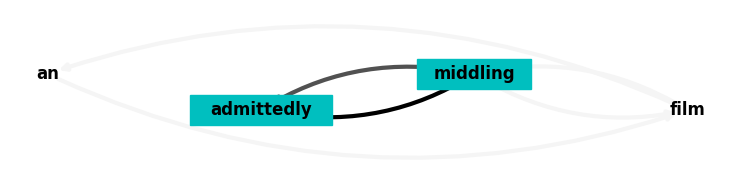}
        \caption{Testing rationale: Negative sentiment.}
        \label{fig:test-neg}
    \end{subfigure}
    \vspace{-5pt}
    \caption{Visualization of DIR Rationales. Each graph shows a comment, \eg ``\emph{a majestic achievement, an epic of astonishing grandeur}" in (a), where rationales are highlighted by deep colors.}
    \label{fig:vis}
\end{figure}

\begin{figure}[t]
    \centering
    \begin{subfigure}[b]{\textwidth}
        \centering
        \includegraphics[width=0.88\textwidth]{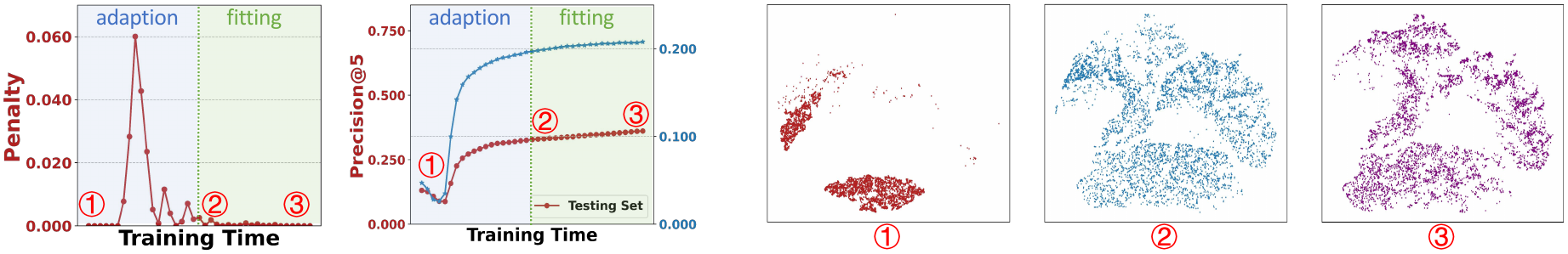}
        \caption{The first two subfigures show the training curves \wrt variance penalty and precision, on Spurious-Motif. The last three subfigures present the rationale distributions of the inspection points, which are visualized by t-SNE~\citep{t-SNE}.}
        \label{fig:dynamics-1}
    \end{subfigure}
    \begin{subfigure}[b]{\textwidth}
        \centering
        \includegraphics[width=1.0\textwidth]{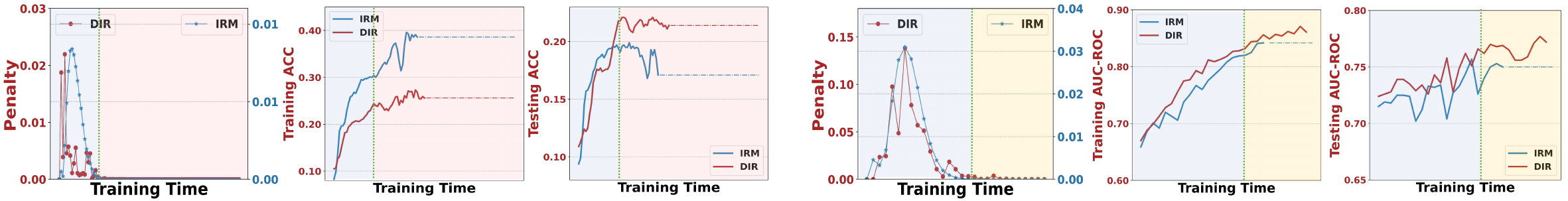}
        \caption{The first three subfigures present the training curves \wrt variance penalty and ACC on MNIST-75sp, while the last three illustrate the curves \wrt variance penalty and AUC-ROC on Molhiv.}
        \label{fig:dynamics-2}
    \end{subfigure}
    \vspace{-15pt}
    \caption{Two-stage Training Dynamics of DIR.}
    \label{fig:dynamics}
    \vspace{-15pt}
\end{figure}

We empirically analyze the DIR's properties which hopefully give insights into its mechanisms and can be instructive for the existing training paradigms of deep models. 

\textbf{Rationale Visualization.}
Towards an intuitive understanding of DIR, we first present some cases of the discovered rationale for Graph-SST2 in Figure~\ref{fig:vis}.
DIR is able to emphasize the tokens that directly result in the sentences' positive or negative sentiments, which are reliable and faithful rationales.
Specifically, DIR highlights the positive words \emph{``majestic achievement''} and \emph{``astonishing grandeur''} in Figure \ref{fig:train-pos} and underscores the negative words \emph{``worst dialogue''} in Figure \ref{fig:train-neg} as the rationales, which are clearly salient for the positive and negative sentiments, respectively.
Furthermore, DIR can focus persistently on the causal features for OOD testing data.
For example, it selects \emph{surprisingly engrossing} and \emph{``admittedly middling''} in Figures \ref{fig:test-pos} and \ref{fig:test-neg}, respectively.
This again validates the effectiveness of DIR: (1) $h_{\tilde{C}}$ is well-learned to distinguish causal and non-causal features under various interventional distributions; and (2) $h_{\tilde{Y}}$ conducts message-passing on the highlighted rationales, extracts the graph representations, and finally outputs the predictions with high accuracy.
See Appendix~\ref{appendix:vis} for more examples in Graph-SST2 and Spurious-Motif datasets.


\textbf{Two-stage Training Dynamics.} As Figure~\ref{fig:dynamics-1} displays, we find a pattern from the Var-Time curve --- during training DIR, the variance penalty (\ie \wx{Var$_{s}$} in \Eqref{eq:dir}) first increases and then decreases to almost zero.
Moreover, there exists an interesting correlation between the variance penalty and the precision metrics --- that is, the precision rises dramatically as the penalty increases while growing slowly as the penalty decreases.
To probe this learning pattern, we further visualize the rationale distribution in three turning points: (1) the start, (2) the middle, and (3) the end of training.
Interestingly, the rationale distribution at the middle point is highly similar to that at the ending point.
This illustrates two stages, adaption and fitting, in the patterns.
By ``adaption'', we mean that the exhibition of $h_{\tilde{C}}$, \ie learning to select salient feature $\tilde{C}$, is mainly conducted during the initial training stage.
Since the penalty value can be seen as the magnitude to violate the invariance condition, this stage explores the rationales that satisfy the DIR principle. 
Correspondingly, $h_{\tilde{Y}}$ adapts quickly with the input of varying rationales generated by $h_{\tilde{C}}$.
By ``fitting'', we mean that, in the later training process, $h_{\tilde{C}}$ only makes small changes, resulting in the substantially unchanged rationales compared to the initial training process, which is learned from the rationale generator to conform to the DIR principle.
This could also imply that based on the well-learned rationales, DIR mainly optimizes $h_{\tilde{Y}}$ to consolidate the functional relation $\tilde{C}\rightarrow Y$ until model convergence.


Moreover, we compare the learning patterns of IRM and DIR in Figure~\ref{fig:dynamics-2}, where the penalty term of IRM (the gradient norm penalty in IRMv1~\citep{IRM}) follows a similar pattern to the DIR penalty. Notably, in MNIST-75sp, while IRM consistently outperforms DIR \wrt Training ACC, it does not improve and even degrades the performance in the testing dataset due to over-fitting.
However, DIR shows the solid resistance for over-fitting, partly thanks to the valid rationales exhibited in the adaption stage. For Molhiv, DIR outperforms IRM as the rationales filter out irrelevant or spurious structures bootless for classification tasks and are beneficial for generalization.

\textbf{Sensitivity Analysis.} We conduct a sensitivity analysis of model performance \wrt $\lambda$ in Appendix \ref{sec:sens}, which shows that DIR surpasses the best baselines under a relatively large range of $\lambda$.

\section{Related Works}
\textbf{Inherent Interpretability of GNNs.} 
We summarize two classes of the existing methods to build deep interpretable GNNs, 
(i) Attention~\citep{att, GAT}, which can be broadly interpreted as importance weights on representations.
(ii) Pooling~\citep{self-attention-pool,att-gen,graph-u-nets}, which selectively performs down-sampling on representations. We include it in this category when it involves selection importance.
However, the mechanisms to generate the rationales could be epistemic, as they only reflect the probabilistic relations between data and predicted labels~\citep{pearl2000causality}, which may not hold true in all data distributions.
Thus, the rationales could fail to align with causal features and even degrade model performance due to being ``fooled'' by spurious features~\citep{invariant-rationalization}.

\textbf{Invariant Learning.}
Backed by causal theory, invariant learning assumes the causal relation from the causal factors $C$ to the response variable $Y$ remains invariant unless we intervene on $Y$. As the most prevailing formulation, IRM~\citep{IRM} extends the invariance assumption from feature level to representation level and finds a data representation $\Phi$ such that \cm{$\Omega\circ \Phi$} matches for all environments, where $\Omega$ is the classifier. 
\cm{However,} concerns about its feasibility~\citep{risk-of-IRM, AhujaWDSV21} and optimality~\citep{KamathTSS21} have been discussed recently. Besides IRM, variance penalization across environments is shown to be effective for recovering invariance~\citep{REx, risk-variance, unshuffling}. Notably, the existing methods generally require accessing different environments, thus additionally involving environment inference~\citep{EIIL,caam}. Similarly motivated as ours, \citet{invariant-rationalization} discover rationales $Z$ by minimizing the performance gap between environment-agnostic predictor $f(Z)$ and environment-aware predictor $f(Z,E)$. \cm{In graph domain, \citet{BevilacquaZ021} construct graph representations from subgraph densities and use attribute symmetry regularization to mitigate the shift of graph size and vertex attribute distributions.}
\section{Conclusion \& Future Work}
In this work, we rigorously study the intrinsic interpretability of Graph Neural Networks from a causal perspective. Our concerns are towards the exhibition of shortcut features when generating the rationales. And we proposed an invariant learning algorithm, DIR, to discover the causal features for rationalization. The core of DIR lies in the construction of environments (\ie interventional distributions) and thus distilling the salient features as rationales that are consistently informative and uniform across these environments. Such rationales serve as the probing towards model mechanisms and are demonstrated to be effective in generalization. In the experiments, we highlight an adaption-fitting training dynamics for DIR to reveal its learning pattern. \cm{In the future, we will build more reliable and expressive interpretable models that are feasible under various assumptions, which potentially calls for high-level interpretability. We recommend interested readers go to the open discussion in Appendix~\ref{sec:open} for the detailed description.}

\newpage
\section*{Acknowledgment} 
This work was supported by the National Key Research and Development Program of China (2020AAA0106000), the National Natural Science Foundation of China (U19A2079), the Sea-NExT Joint Lab, and Singapore MOE AcRF T2.

\section*{Ethics Statement} In this work, we propose a novel algorithm for intrinsic interpretable models, where no human subject is related. This synthetic dataset is made available in the anonymous link (\cf Section~\ref{sec:set}). We believe the exhibition of rationales is beneficial for inspecting and eliminating potential discrimination and fairness issues in deep models for real applications. 

\section*{Reproducibility Statement}
We summarize the efforts made to ensure reproducibility in this work. (1) Datasets: We use one synthetic dataset which is made available (\cf the anonymous link in Section~\ref{sec:set}), and three public datasets where the processing details are included in Appendix~\ref{appendix:details}. (2) Model Training: We provide the procedure of training in Algorithm~\ref{appendix:algo} and the training details (including hyper-parameter settings) in Appendix~\ref{appendix:details} which are consistent with our implementation in the code (\cf the anonymous link in Section~\ref{sec:set}). (3) Theoretical Results: All assumptions and proofs can be referred to Appendix~\ref{appendix:theory}.

\bibliography{iclr2022_conference}

\begin{thebibliography}{52}
\providecommand{\natexlab}[1]{#1}
\providecommand{\url}[1]{\texttt{#1}}
\expandafter\ifx\csname urlstyle\endcsname\relax
  \providecommand{\doi}[1]{doi: #1}\else
  \providecommand{\doi}{doi: \begingroup \urlstyle{rm}\Url}\fi

\bibitem[Ahuja et~al.(2021)Ahuja, Wang, Dhurandhar, Shanmugam, and
  Varshney]{AhujaWDSV21}
Kartik Ahuja, Jun Wang, Amit Dhurandhar, Karthikeyan Shanmugam, and Kush~R.
  Varshney.
\newblock Empirical or invariant risk minimization? {A} sample complexity
  perspective.
\newblock In \emph{ICLR}, 2021.

\bibitem[Alvarez-Melis \& Jaakkola(2017)Alvarez-Melis and
  Jaakkola]{causal-framework-black-box}
David Alvarez-Melis and Tommi~S. Jaakkola.
\newblock A causal framework for explaining the predictions of black-box
  sequence-to-sequence models.
\newblock In \emph{EMNLP}, pp.\  412–421, 2017.

\bibitem[Arjovsky et~al.(2019)Arjovsky, Bottou, Gulrajani, and
  Lopez{-}Paz]{IRM}
Mart{\'{\i}}n Arjovsky, L{\'{e}}on Bottou, Ishaan Gulrajani, and David
  Lopez{-}Paz.
\newblock Invariant risk minimization.
\newblock \emph{CoRR}, abs/1907.02893, 2019.

\bibitem[Bengio et~al.(2013)Bengio, Courville, and Vincent]{BengioCV13}
Yoshua Bengio, Aaron~C. Courville, and Pascal Vincent.
\newblock Representation learning: {A} review and new perspectives.
\newblock \emph{{IEEE} Trans. Pattern Anal. Mach. Intell.}, 2013.

\bibitem[Bevilacqua et~al.(2021)Bevilacqua, Zhou, and Ribeiro]{BevilacquaZ021}
Beatrice Bevilacqua, Yangze Zhou, and Bruno Ribeiro.
\newblock Size-invariant graph representations for graph classification
  extrapolations.
\newblock In \emph{ICML}, 2021.

\bibitem[Bianchi et~al.(2019)Bianchi, Grattarola, Livi, and Alippi]{ARMA}
Filippo~Maria Bianchi, Daniele Grattarola, Lorenzo Livi, and Cesare Alippi.
\newblock Graph neural networks with convolutional {ARMA} filters.
\newblock \emph{CoRR}, abs/1901.01343, 2019.

\bibitem[Bouritsas et~al.(2020)Bouritsas, Frasca, Zafeiriou, and
  Bronstein]{GSN}
Giorgos Bouritsas, Fabrizio Frasca, Stefanos Zafeiriou, and Michael~M.
  Bronstein.
\newblock Improving graph neural network expressivity via subgraph isomorphism
  counting.
\newblock \emph{arXiv}, 2006.09252, 2020.

\bibitem[B{\"{u}}hlmann(2018)]{Invariance}
Peter B{\"{u}}hlmann.
\newblock Invariance, causality and robustness.
\newblock \emph{arXiv}, 1812.08233, 2018.

\bibitem[Cad{\`{e}}ne et~al.(2019)Cad{\`{e}}ne, Dancette, Ben{-}younes, Cord,
  and Parikh]{rubi}
R{\'{e}}mi Cad{\`{e}}ne, Corentin Dancette, Hedi Ben{-}younes, Matthieu Cord,
  and Devi Parikh.
\newblock Rubi: Reducing unimodal biases for visual question answering.
\newblock In Hanna~M. Wallach, Hugo Larochelle, Alina Beygelzimer, Florence
  d'Alch{\'{e}}{-}Buc, Emily~B. Fox, and Roman Garnett (eds.), \emph{NeurIPS},
  2019.

\bibitem[Chan et~al.(2021)Chan, Yu, You, Qi, Wright, and Ma]{ReduNet}
Kwan Ho~Ryan Chan, Yaodong Yu, Chong You, Haozhi Qi, John Wright, and Yi~Ma.
\newblock Redunet: {A} white-box deep network from the principle of maximizing
  rate reduction.
\newblock \emph{arXiv}, 2105.10446, 2021.

\bibitem[Chang et~al.(2020)Chang, Zhang, Yu, and
  Jaakkola]{invariant-rationalization}
Shiyu Chang, Yang Zhang, Mo~Yu, and Tommi~S. Jaakkola.
\newblock Invariant rationalization.
\newblock In \emph{ICML}, 2020.

\bibitem[Chen et~al.(2019)Chen, Villar, Chen, and Bruna]{ChenVCB19}
Zhengdao Chen, Soledad Villar, Lei Chen, and Joan Bruna.
\newblock On the equivalence between graph isomorphism testing and function
  approximation with gnns.
\newblock In \emph{NeurIPS}, 2019.

\bibitem[Creager et~al.(2021)Creager, Jacobsen, and Zemel]{EIIL}
Elliot Creager, J{\"{o}}rn{-}Henrik Jacobsen, and Richard~S. Zemel.
\newblock Environment inference for invariant learning.
\newblock In Marina Meila and Tong Zhang (eds.), \emph{ICML}, 2021.

\bibitem[Devlin et~al.(2018)Devlin, Chang, Lee, and Toutanova]{bert}
Jacob Devlin, Ming-Wei Chang, Kenton Lee, and Kristina Toutanova.
\newblock Bert: Pre-training of deep bidirectional transformers for language
  understanding.
\newblock \emph{arXiv preprint arXiv:1810.04805}, 2018.

\bibitem[Dwivedi et~al.(2020)Dwivedi, Joshi, Laurent, Bengio, and
  Bresson]{Benchmarking}
Vijay~Prakash Dwivedi, Chaitanya~K. Joshi, Thomas Laurent, Yoshua Bengio, and
  Xavier Bresson.
\newblock Benchmarking graph neural networks.
\newblock \emph{CoRR}, abs/2003.00982, 2020.

\bibitem[Gao \& Ji(2019)Gao and Ji]{graph-u-nets}
Hongyang Gao and Shuiwang Ji.
\newblock Graph u-nets.
\newblock In Kamalika Chaudhuri and Ruslan Salakhutdinov (eds.), \emph{ICML},
  pp.\  2083--2092, 2019.

\bibitem[Hamilton et~al.(2017)Hamilton, Ying, and Leskovec]{GraphSage}
William~L. Hamilton, Zhitao Ying, and Jure Leskovec.
\newblock Inductive representation learning on large graphs.
\newblock In \emph{{NeurIPS}}, pp.\  1024--1034, 2017.

\bibitem[Hu et~al.(2020)Hu, Fey, Zitnik, Dong, Ren, Liu, Catasta, and
  Leskovec]{hu2020ogb}
Weihua Hu, Matthias Fey, Marinka Zitnik, Yuxiao Dong, Hongyu Ren, Bowen Liu,
  Michele Catasta, and Jure Leskovec.
\newblock Open graph benchmark: Datasets for machine learning on graphs.
\newblock \emph{arXiv preprint arXiv:2005.00687}, 2020.

\bibitem[Hu et~al.(2021)Hu, Fey, Ren, Nakata, Dong, and Leskovec]{hu2021ogblsc}
Weihua Hu, Matthias Fey, Hongyu Ren, Maho Nakata, Yuxiao Dong, and Jure
  Leskovec.
\newblock Ogb-lsc: A large-scale challenge for machine learning on graphs.
\newblock \emph{arXiv preprint arXiv:2103.09430}, 2021.

\bibitem[Kamath et~al.(2021)Kamath, Tangella, Sutherland, and
  Srebro]{KamathTSS21}
Pritish Kamath, Akilesh Tangella, Danica~J. Sutherland, and Nathan Srebro.
\newblock Does invariant risk minimization capture invariance?
\newblock In Arindam Banerjee and Kenji Fukumizu (eds.), \emph{AISTATS}, 2021.

\bibitem[Kingma \& Ba(2015)Kingma and Ba]{adam}
Diederik~P. Kingma and Jimmy Ba.
\newblock Adam: {A} method for stochastic optimization.
\newblock In Yoshua Bengio and Yann LeCun (eds.), \emph{3rd International
  Conference on Learning Representations, {ICLR} 2015, San Diego, CA, USA, May
  7-9, 2015, Conference Track Proceedings}, 2015.

\bibitem[Kipf \& Welling(2017)Kipf and Welling]{KipfW17}
Thomas~N. Kipf and Max Welling.
\newblock Semi-supervised classification with graph convolutional networks.
\newblock In \emph{ICLR}, 2017.

\bibitem[Knyazev et~al.(2019)Knyazev, Taylor, and Amer]{att-gen}
Boris Knyazev, Graham~W. Taylor, and Mohamed~R. Amer.
\newblock Understanding attention and generalization in graph neural networks.
\newblock In Hanna~M. Wallach, Hugo Larochelle, Alina Beygelzimer, Florence
  d'Alch{\'{e}}{-}Buc, Emily~B. Fox, and Roman Garnett (eds.), \emph{NeurIPS},
  pp.\  4204--4214, 2019.

\bibitem[Krueger et~al.(2021)Krueger, Caballero, Jacobsen, Zhang, Binas, Zhang,
  Priol, and Courville]{REx}
David Krueger, Ethan Caballero, J{\"{o}}rn{-}Henrik Jacobsen, Amy Zhang,
  Jonathan Binas, Dinghuai Zhang, R{\'{e}}mi~Le Priol, and Aaron~C. Courville.
\newblock Out-of-distribution generalization via risk extrapolation (rex).
\newblock In Marina Meila and Tong Zhang (eds.), \emph{ICML}, pp.\  5815--5826,
  2021.

\bibitem[Kullback(1997)]{kullback1997information}
Solomon Kullback.
\newblock \emph{Information theory and statistics}.
\newblock Courier Corporation, 1997.

\bibitem[Lee et~al.(2019)Lee, Lee, and Kang]{self-attention-pool}
Junhyun Lee, Inyeop Lee, and Jaewoo Kang.
\newblock Self-attention graph pooling.
\newblock In Kamalika Chaudhuri and Ruslan Salakhutdinov (eds.), \emph{ICML},
  pp.\  3734--3743, 2019.

\bibitem[Li et~al.(2020)Li, Wang, Wang, and Leskovec]{LiWWL20}
Pan Li, Yanbang Wang, Hongwei Wang, and Jure Leskovec.
\newblock Distance encoding: Design provably more powerful neural networks for
  graph representation learning.
\newblock In \emph{NeurIPS}, 2020.

\bibitem[Luo et~al.(2020)Luo, Cheng, Xu, Yu, Zong, Chen, and
  Zhang]{PGExplainer}
Dongsheng Luo, Wei Cheng, Dongkuan Xu, Wenchao Yu, Bo~Zong, Haifeng Chen, and
  Xiang Zhang.
\newblock Parameterized explainer for graph neural network.
\newblock In \emph{NeurIPS}, 2020.

\bibitem[Maron et~al.(2019)Maron, Ben{-}Hamu, Serviansky, and
  Lipman]{MaronBSL19}
Haggai Maron, Heli Ben{-}Hamu, Hadar Serviansky, and Yaron Lipman.
\newblock Provably powerful graph networks.
\newblock In \emph{NeurIPS}, 2019.

\bibitem[Morris et~al.(2019)Morris, Ritzert, Fey, Hamilton, Lenssen, Rattan,
  and Grohe]{Graphconv}
Christopher Morris, Martin Ritzert, Matthias Fey, William~L. Hamilton, Jan~Eric
  Lenssen, Gaurav Rattan, and Martin Grohe.
\newblock Weisfeiler and leman go neural: Higher-order graph neural networks.
\newblock In \emph{AAAI}, pp.\  4602--4609, 2019.

\bibitem[Pearl(2000)]{pearl2000causality}
Judea Pearl.
\newblock \emph{Causality: Models, Reasoning, and Inference}.
\newblock 2000.

\bibitem[Pearl et~al.(2016)Pearl, Glymour, and Jewell]{pearl2016causal}
Judea Pearl, Madelyn Glymour, and Nicholas~P Jewell.
\newblock \emph{Causal inference in statistics: A primer}.
\newblock John Wiley \& Sons, 2016.

\bibitem[Ranjan et~al.(2020)Ranjan, Sanyal, and Talukdar]{LEConv}
Ekagra Ranjan, Soumya Sanyal, and Partha~P. Talukdar.
\newblock {ASAP:} adaptive structure aware pooling for learning hierarchical
  graph representations.
\newblock In \emph{AAAI}, pp.\  5470--5477, 2020.

\bibitem[Rosenfeld et~al.(2021)Rosenfeld, Ravikumar, and Risteski]{risk-of-IRM}
Elan Rosenfeld, Pradeep~Kumar Ravikumar, and Andrej Risteski.
\newblock The risks of invariant risk minimization.
\newblock In \emph{ICLR}, 2021.

\bibitem[Sagawa et~al.(2019)Sagawa, Koh, Hashimoto, and Liang]{group-DRO}
Shiori Sagawa, Pang~Wei Koh, Tatsunori~B. Hashimoto, and Percy Liang.
\newblock Distributionally robust neural networks for group shifts: On the
  importance of regularization for worst-case generalization.
\newblock \emph{CoRR}, abs/1911.08731, 2019.

\bibitem[Senior et~al.(2020)Senior, Evans, Jumper, Kirkpatrick, Sifre, Green,
  Qin, Z{\'{\i}}dek, Nelson, Bridgland, Penedones, Petersen, Simonyan, Crossan,
  Kohli, Jones, Silver, Kavukcuoglu, and Hassabis]{AlphaFold}
Andrew~W. Senior, Richard Evans, John Jumper, James Kirkpatrick, Laurent Sifre,
  Tim Green, Chongli Qin, Augustin Z{\'{\i}}dek, Alexander W.~R. Nelson, Alex
  Bridgland, Hugo Penedones, Stig Petersen, Karen Simonyan, Steve Crossan,
  Pushmeet Kohli, David~T. Jones, David Silver, Koray Kavukcuoglu, and Demis
  Hassabis.
\newblock Improved protein structure prediction using potentials from deep
  learning.
\newblock \emph{Nature}, 577\penalty0 (7792):\penalty0 706--710, 2020.

\bibitem[Socher et~al.(2013)Socher, Perelygin, Wu, Chuang, Manning, Ng, and
  Potts]{SST2}
Richard Socher, Alex Perelygin, Jean Wu, Jason Chuang, Christopher~D. Manning,
  Andrew~Y. Ng, and Christopher Potts.
\newblock Recursive deep models for semantic compositionality over a sentiment
  treebank.
\newblock In \emph{{EMNLP}}, pp.\  1631--1642, 2013.

\bibitem[Teney et~al.(2020)Teney, Abbasnejad, and van~den Hengel]{unshuffling}
Damien Teney, Ehsan Abbasnejad, and Anton van~den Hengel.
\newblock Unshuffling data for improved generalization.
\newblock \emph{arXiv}, 2002.11894, 2020.

\bibitem[Tian et~al.(2006)Tian, Kang, and Pearl]{interventional-distribution}
Jin Tian, Changsung Kang, and Judea Pearl.
\newblock A characterization of interventional distributions in semi-markovian
  causal models.
\newblock In \emph{{AAAI}}, pp.\  1239--1244, 2006.

\bibitem[van~der Maaten(2008)]{t-SNE}
G.E. van~der Maaten, L.J.P.;~Hinton.
\newblock Visualizing high-dimensional data using t-sne.
\newblock \emph{Journal of Machine Learning Research 9:2579-2605}, 2008.

\bibitem[VanderWeele(2013)]{vanderweele2013three}
Tyler~J VanderWeele.
\newblock A three-way decomposition of a total effect into direct, indirect,
  and interactive effects.
\newblock \emph{Epidemiology (Cambridge, Mass.)}, 24\penalty0 (2):\penalty0
  224, 2013.

\bibitem[Vaswani et~al.(2017)Vaswani, Shazeer, Parmar, Uszkoreit, Jones, Gomez,
  Kaiser, and Polosukhin]{att}
Ashish Vaswani, Noam Shazeer, Niki Parmar, Jakob Uszkoreit, Llion Jones,
  Aidan~N. Gomez, Lukasz Kaiser, and Illia Polosukhin.
\newblock Attention is all you need.
\newblock In Isabelle Guyon, Ulrike von Luxburg, Samy Bengio, Hanna~M. Wallach,
  Rob Fergus, S.~V.~N. Vishwanathan, and Roman Garnett (eds.), \emph{NeurIPS},
  2017.

\bibitem[Veli{\v{c}}kovi{\'{c}} et~al.(2018)Veli{\v{c}}kovi{\'{c}}, Cucurull,
  Casanova, Romero, Li{\`{o}}, and Bengio]{GAT}
Petar Veli{\v{c}}kovi{\'{c}}, Guillem Cucurull, Arantxa Casanova, Adriana
  Romero, Pietro Li{\`{o}}, and Yoshua Bengio.
\newblock Graph attention networks.
\newblock \emph{ICLR}, 2018.
\newblock accepted as poster.

\bibitem[Wang et~al.(2021{\natexlab{a}})Wang, Yue, Huang, Sun, and
  Zhang]{wangtan2021}
Tan Wang, Zhongqi Yue, Jianqiang Huang, Qianru Sun, and Hanwang Zhang.
\newblock Self-supervised learning disentangled group representation as
  feature.
\newblock \emph{arXiv}, 2110.15255, 2021{\natexlab{a}}.

\bibitem[Wang et~al.(2021{\natexlab{b}})Wang, Zhou, Sun, and Zhang]{caam}
Tan Wang, Chang Zhou, Qianru Sun, and Hanwang Zhang.
\newblock Causal attention for unbiased visual recognition.
\newblock \emph{arXiv}, 2108.08782, 2021{\natexlab{b}}.

\bibitem[Wang et~al.(2021{\natexlab{c}})Wang, Wu, Zhang, He, and seng
  Chua]{wang2021towards}
Xiang Wang, Yingxin Wu, An~Zhang, Xiangnan He, and Tat seng Chua.
\newblock Towards multi-grained explainability for graph neural networks.
\newblock In \emph{NeurIPS}, 2021{\natexlab{c}}.

\bibitem[Wu et~al.(2017)Wu, Ramsundar, Feinberg, Gomes, Geniesse, Pappu,
  Leswing, and Pande]{MoleculeNet}
Zhenqin Wu, Bharath Ramsundar, Evan~N. Feinberg, Joseph Gomes, Caleb Geniesse,
  Aneesh~S. Pappu, Karl Leswing, and Vijay~S. Pande.
\newblock Moleculenet: {A} benchmark for molecular machine learning.
\newblock \emph{arXiv}, abs/1703.00564, 2017.

\bibitem[Xie et~al.(2020)Xie, Chen, Liu, and Li]{risk-variance}
Chuanlong Xie, Fei Chen, Yue Liu, and Zhenguo Li.
\newblock Risk variance penalization: From distributional robustness to
  causality.
\newblock \emph{arXiv}, 2006.07544, 2020.

\bibitem[Xu et~al.(2019)Xu, Hu, Leskovec, and Jegelka]{GIN}
Keyulu Xu, Weihua Hu, Jure Leskovec, and Stefanie Jegelka.
\newblock How powerful are graph neural networks?
\newblock In \emph{ICLR}. OpenReview.net, 2019.

\bibitem[Ying et~al.(2019)Ying, Bourgeois, You, Zitnik, and
  Leskovec]{GNNExplainer}
Zhitao Ying, Dylan Bourgeois, Jiaxuan You, Marinka Zitnik, and Jure Leskovec.
\newblock Gnnexplainer: Generating explanations for graph neural networks.
\newblock In \emph{NeurIPS}, pp.\  9240--9251, 2019.

\bibitem[Yuan et~al.(2020)Yuan, Yu, Gui, and Ji]{Graph-SST2}
Hao Yuan, Haiyang Yu, Shurui Gui, and Shuiwang Ji.
\newblock Explainability in graph neural networks: {A} taxonomic survey.
\newblock \emph{CoRR}, 2020.

\bibitem[Yuan et~al.(2021)Yuan, Yu, Wang, Li, and Ji]{SubgraphX}
Hao Yuan, Haiyang Yu, Jie Wang, Kang Li, and Shuiwang Ji.
\newblock On explainability of graph neural networks via subgraph explorations.
\newblock \emph{ArXiv}, 2021.

\end{thebibliography}
\bibliographystyle{iclr2022_conference}

\newpage
\appendix

\section{Notations \& Algorithm}
\label{appendix:algo}

\begin{table}[H]
    \centering
    \caption*{\y{Key Notations in the Main Paper.}}
    \vspace{-5pt}
    \begin{tabular}{ll}
    \toprule
    \y{Symbol} & \y{Definition}\\
    \midrule
    \y{$g$} &\y{graph instance}\\ 
    \y{$c$ / $s$}& \y{ground truth causal or confounding subgraph}\\ 
    \y{$\tilde{c}$\ / $\tilde{s}$}&\y{generated rationale or complement of rationale instance}\\ 
    \y{$C$\ / $S$} & \y{variables in the causal graph}\\
    \y{$\mathbb{S}$\ / $\tilde{\mathbb{S}}$}&\y{space of the ground truth or identified spurious features}\\ 
    \y{$\hat{y}_{\tilde{c}}$\ / $\hat{y}_{\tilde{s}}$}& \y{causal or spurious prediction}\\ 
    \y{$\hat{y}$} & \y{joint prediction}\\
    \y{$h_{\tilde{C}}$} & \y{rationale generator}\\
    \y{$\Phi_1$\ / $\Phi_2$} & \y{causal or spurious classifier}\\
    \bottomrule
    \end{tabular}
\end{table}

\begin{algorithm}[H]
    \caption{Pseudocode for DIR in training interpretable Graph Neural Networks (Batch Version)}
    \label{alg:dir}
    \begin{algorithmic}[1]
    \REQUIRE Training data distribution $\mathcal{P}_{tr}(G)$; number of classes $Q$; Stepsize $\alpha$; hyper-parameter $\lambda$
    \STATE Randomly initialize the parameters of generator $h_{\tilde{C}}$, encoder $h_{\theta}$ (includes $\text{GNN}_2$ and Pooling layer), two classifiers $\Phi_1$  and $\Phi_2$, which are denoted as $\gamma, \theta, \phi_1, \phi_2$, respectively.
    \WHILE{not converge}
    \STATE Sample graphs \begin{small}$\{(g^i,y^i)\}_{i=1}^{B}$\end{small} from $\mathcal{P}_{tr}(G)$
    \STATE Generate each rationale and its complement: \begin{small}$(\tilde{c}^i,\tilde{s}^i)\leftarrow h_{\tilde{C}}(g^i), \text{ for } i=1,\ldots,B.$\end{small}
    \FOR{each $\tilde{s}^i$}
    \STATE Intervener $h_I$ operates  $do(S=\tilde{s}^i)$
    \STATE Model forward: $\hat{y}_{\tilde{s}}=\Phi_2(h_{\theta}(\tilde{s}^i))\in \mathbb{R}^{1\times Q}$,  $\{\hat{y}_{\tilde{c}}\}_{i=1}^B=\Phi_1(h_{\theta}(\{\tilde{c}^i\}_{i=1}^B))\in \mathbb{R}^{B\times Q}$
    \STATE \texttt{\# block BP of DIR risk to shortcut branch}\\ Obtain joint prediction $\hat{y}=[\hat{y}^1,\ldots,\hat{y}^B]$, where $\hat{y}^j=\hat{y}_{\tilde{c}}\odot \sigma(\hat{y}_{\tilde{s}}^j)\text{.detach}()$ 
    \STATE Compute and record risk \begin{small}$\mathcal{R}(\hat{y}_{\tilde{s}},y^i)$\end{small}
    \STATE Compute and record $\tilde{s}^i$-interventional risk.
    \ENDFOR
    \STATE Compute $\mathcal{R}_{\text{DIR}}$ via Eq.~\ref{eq:dir} and $\mathcal{R}_{\tilde{S}}$ via Eq.~\ref{eq:branch}
    \STATE  Update parameters: $\phi_2=\phi_2-\alpha\nabla_{\phi_2}\mathcal{R}_{\tilde{S}}$; $\phi_1=\phi_1-\alpha\nabla_{\phi_1}\mathcal{R}_{\text{DIR}}$; \\ \qquad \qquad \qquad  \qquad$\gamma=\gamma-\alpha\nabla_{\gamma}\mathcal{R}_{\text{DIR}}$; $\theta=\theta-\alpha\nabla_{\theta}\mathcal{R}_{\text{DIR}}$
    \ENDWHILE
    \end{algorithmic}
\end{algorithm}


\section{Instantiated Causal Graphs}
\label{appendix:assumption}
We instantiate possible causal graphs in Figure~\ref{fig:scm}. Specifically, we use the example of Base-Motif graphs, whose labels are determined by the motif types. We use $C=0,1,2$ to denote cycle, house, crane, respectively; And use $S=0,1,2$ to denote ladder, tree, wheels, respectively. 
\begin{itemize}[leftmargin=*]
    \item $C\ \indep\ S$: Base graphs and motif graphs are independently sampled and attached to each other.
    \item $C\rightarrow S$: Type of each motif respects to a given (static) probability distribution. According to the value of $C$, the probability distribution of its base graph is given by 
    \begin{equation}
    P(S)=\left\{      
    \begin{array}{cl}
    0.6 &  \text{if}\ S=C \\
    0.2  &  \text{otherwise} \\
    \end{array} \right.
    \end{equation}
    \item $S\rightarrow C$: Similar to the example for  $C\rightarrow S$.
    \item  $S \leftarrow E \rightarrow C$: Suppose there is a latent variable $E$ takes continuous value from $0$ to $1$. Then the probability distribution of $S$ and $C$ \st
    \begin{equation}
        \label{eq:latent}
        S \sim \mathcal{B}(3, E)\ \ \ \ \ C \sim \mathcal{B}(3, 1-E)
    \end{equation}
    where $\mathcal{B}$ stands for binomial distribution, \ie for variable $X$, if $X \sim \mathcal{B}(n, p)$, then we have
    $$P(X=k \mid p, n)=\left(\begin{array}{l}
        n \\
        k
        \end{array}\right) p^{k}(1-p)^{n-k}$$
\end{itemize}

\section{Theory}
\label{appendix:theory}
\subsection{Assumption} 
\label{appendix:assum}
We phrase the SCM in Figure~\ref{fig:scm} as the following assumption:
\begin{assumption}[Invariant Rationalization (IR)]
There exists a rationale $C \subseteq G$, such that the structural equation model
$$
Y \leftarrow f_{Y}\left(C, \epsilon_{Y}\right), \epsilon_{Y} \indep C
$$
and the probability relation
$$
S \indep Y \mid C
$$
hold for every distribution $\tilde{\mathcal{P}}$ over $\mathcal{P}(G, Y)$, where $S$ denotes the complement of $C$. Also, we denote $f_{Y}$ as the oracle structural equation model.
\end{assumption}

By ``oracle'', we mean that $f_Y$ is the perfect structure equation model, which, when $C$ is available, predicts the response variable with the minimum expected loss over any distribution $\tilde{\mathcal{P}}$. Or formally,  
\begin{equation}
    f_{Y} := \argmin_f \mathcal{R}(f)= \argmin_f \Space{E}_{(G,Y)\sim \tilde{\mathcal{P}},\epsilon_{Y}}[l(f(C,Y),\epsilon_{Y}), Y)].
    \label{eq:oracle}
\end{equation}
where $l$ is the task-specific loss function and we ignore the exogenous noise $\epsilon_{Y}$ in $f_{Y}$'s input except as otherwise noted.

Next, we argue that the assumption is commonly satisfied. For example, for sentences labeled by sentiment,
$C$ can represent the positive/negative words that cause the sentiment, while $S$ includes the
prepositions and linking words. For molecule graphs labeled by specific properties, $C$ and $S$ can
represent the functional groups and carbon structures, respectively. 
Note that IR assumption enables and calls the introduction of interpretability, highlighting salient features and exhibiting human accessible checks. 
More importantly, it guarantees the model performance under possible feature reduction, \ie $C\subset G$.

We also see cases going beyond the IR Assumption. For example, $G$ could be a generic function of $S$ and $C$, instead of a simple joint. \y{We use a toy example to elaborate this point. Following the Spurious-Motif dataset, we assume each graph has multiple motifs (house, cycle, crane) with only one type and is labeled by the motif type. Thus, the causal feature $C$ will be the motifs. Let the spurious feature $S$ be "the way we connect the motifs". For example, we can place the house motifs in a queue sequence and connect the adjacent motifs, thus forming the graph in a "line" shape. Or we can place the houses in a cycle order and connect them into a ring. We further make such graph structures strongly correlated with the motif types. Thus, individual $S$ and $C$ may be intractable individually in the feature level. For example, if we separate the cycle-shaped houses into two lines, the spurious pattern could be broken while the part of the causal feature would be lost. In other words, $S$ and $C$ are dependent variables. Thus, they can't be extracted and modeled separately, which goes out of the scope of our work. }

\cm{
Given that $S$ and $C$ are separable, we further make the following assumption to avoid the confusion of $S$ and $C$:
\begin{assumption}[Feature Induction]
    Define power set operation as $\mathcal{P}^*(\cdot)$. For data $G = S\cup C$ and label $Y$, if $S \indep Y \mid C$ holds for any distribution $\tilde{\mathcal{P}}$ over $\mathcal{P}(G, Y)$, then it implies that for any induced feature $S'\in\mathcal{P}^*(S)$, we have $S' \indep Y \mid C$ holds for the distribution $\tilde{\mathcal{P}}$.
\end{assumption}

This assumption also implies that $C$ could not be induced by $S$ when $|C|\leq|S|$. Thus, any feature subset $C'$ except for $C$ would violate the conditional independence condition. For images, this assumption is natural for the splicing of $S$ doesn't typically change its semantics. For example, the splicing of land background would still be divided land. While for graphs, here we assume the causal subgraph's uniqueness among the induced complement graphs.

}

\subsection{Proofs} 
\label{sec:proofs}
\begin{theorem}[Necessity]
    \label{theorem:necessity}
    Suppose $S\rightarrow C$ does not exist, then the oracle function $f_{Y}$ satisfies the DIR Principle (where $C$ is given) over every distribution $\tilde{\mathcal{P}} \in \mathcal{P}(G, Y)$.
\end{theorem}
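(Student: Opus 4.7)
The plan is to show that each $s$-interventional risk of $f_Y$ equals the (minimum) baseline risk, so that (1) the expectation over $s$ is minimized and (2) the variance over $s$ is identically zero. The two ingredients are the modularity property of the SCM together with the structural assumption that $S \not\rightarrow C$, and the oracle property of $f_Y$ stated in \Eqref{eq:oracle}.

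\paragraph{Step 1: Invariance of the joint $(C,Y)$ distribution under $do(S=s)$.}
By the semantics of Pearl's do-operator, intervening on $S$ replaces the structural equation for $S$ by the constant $s$ and leaves every other structural equation intact. Under the hypothesis $S \not\rightarrow C$, the structural equation for $C$ does not involve $S$, so the marginal $P(C\mid do(S=s))=P(C)$. Moreover, by the IR Assumption the equation $Y\leftarrow f_Y(C,\epsilon_Y)$ with $\epsilon_Y\indep C$ continues to hold post-intervention. I would combine these two facts to conclude that the joint distribution of $(C,Y)$ under $do(S=s)$ coincides with its observational joint, for every $s\in\Space{S}$ and every $\tilde{\mathcal{P}}\in\mathcal{P}(G,Y)$.

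\paragraph{Step 2: Constancy of the $s$-interventional risk of $f_Y$.}
Since $f_Y$ takes only $C$ as input and the law of $(C,Y)$ is unchanged under each $do(S=s)$, I would write
\begin{equation}
\mathcal{R}(f_Y(G),Y\mid do(S=s))
= \Space{E}_{(C,Y),\epsilon_Y}\bigl[l(f_Y(C,\epsilon_Y),Y)\bigr]
= \mathcal{R}(f_Y),
\end{equation}
which is independent of $s$. Consequently $\text{Var}_s(\{\mathcal{R}(f_Y(G),Y\mid do(S=s))\})=0$, giving condition (2) of the DIR Principle.

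\paragraph{Step 3: Optimality of the expected risk.}
For condition (1), I would invoke \Eqref{eq:oracle}: $f_Y$ is by definition the minimizer of $\mathcal{R}(f)$ over any admissible distribution. Because every $s$-interventional risk equals this common minimum from Step~2, the expectation $\Space{E}_s[\mathcal{R}(f_Y(G),Y\mid do(S=s))]=\mathcal{R}(f_Y)$ is also minimized, which closes the argument.

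\paragraph{Main obstacle.}
The delicate point is Step~1: one must justify carefully that $P(C\mid do(S=s))=P(C)$ whenever $S\not\rightarrow C$, even in the presence of the dashed bidirected link $C\dashleftarrow\!\dashrightarrow S$ in Figure~\ref{fig:scm}. This is immediate in the $C\indep S$ and $C\rightarrow S$ cases, but the hidden-common-cause case ($C\leftarrow E\rightarrow S$) requires noting that $do(S=s)$ severs every incoming arrow into $S$ (including the one from $E$), so $E$ (and hence $C$) retains its pre-intervention distribution. Once this modularity step is made rigorous, the remainder is a direct computation using the oracle property.
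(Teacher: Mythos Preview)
Your proposal is correct and follows the same overall logic as the paper: establish that the joint law of $(C,Y)$ is unchanged under $do(S=s)$, deduce that every $s$-interventional risk of $f_Y$ equals the common value $\mathcal{R}(f_Y)$, hence the variance term vanishes, and then invoke the oracle property \Eqref{eq:oracle} for the minimality of the expectation term.

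The only notable difference is in how Step~1 is executed. The paper proceeds by an explicit case split over the three admissible relationships $C\indep S$, $C\rightarrow S$, and $C\leftarrow E\rightarrow S$, computing $P(Y=y\mid do(S=s))=P(Y=y)$ and $P(C=c\mid do(S=s))=P(C=c)$ separately in each case via conditioning and the IR assumption. You instead invoke the modularity/autonomy of structural equations in one stroke: since $do(S=s)$ only rewrites the mechanism for $S$, and neither $C$'s nor $Y$'s mechanism takes $S$ as an argument when $S\not\rightarrow C$, the $(C,Y)$ law is untouched. Your route is shorter and handles the hidden-confounder case $C\leftarrow E\rightarrow S$ uniformly (you correctly flag that $E$'s mechanism is also unaffected), whereas the paper's case analysis makes each dependence structure fully explicit at the cost of some repetition. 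Both arguments are valid and yield the same conclusion.
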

\begin{proof}
    We first prove the fact that $P(Y=y \mid d o(S=s))=P(Y=y)$ for distribution $\tilde{\mathcal{P}}$. Specifically, we use $P^{(s)}_I$ to denote the s-interventional distribution.
    \begin{itemize}[leftmargin=*]
    \item If $C\rightarrow S$,
        \begin{align*}
        P(Y=y \mid d o(S=s))&\xlongequal{\text{by definition}}P^{(s)}_I(Y=y \mid S=s)\\
        &=\sum_{c} P^{(s)}_I(Y=y \mid S=s, C=c) P^{(s)}_I(C=c\mid S=s)\\
        &\xlongequal{\text{given } C\rightarrow S}P^{(s)}_I(Y=y \mid S=s, C=c) P^{(s)}_I(C=c)\\
        &\xlongequal{\text{given }(Y\indep S | C)_{\tilde{\mathcal{P}}}}\sum_{c} P^{(s)}_I(Y=y \mid C=c) P^{(s)}_I(C=c)\\
        &\xlongequal{\text{given invariance condition}}\sum_{c} P(Y=y \mid C=c) P(C=c)\\
        &\xlongequal{}P(Y=y)\\
        \end{align*}
    \item If  $C\indep S$,
        \begin{align*}
        P(Y=y \mid d o(S=s))&\xlongequal{\text{by definition}}P^{(s)}_I(Y=y \mid S=s)\\
        &\xlongequal{\text{given $S$ has no endogenous parent}}P(Y=y \mid S=s)\\
        &\xlongequal{\text{given} C\indep S} \sum_{c} P(Y=y \mid C=c,S=s) P(C=c)\\
        &= \sum_{c} P(Y=y \mid C=c,S=s) P(C=c\mid S=s)\\
        &\xlongequal{\text{given }(Y\indep S | C)_{\tilde{\mathcal{P}}}}\sum_{c} P(Y=y \mid C=c) P(C=c)\\
        &\xlongequal{}P(Y=y)\\
        \end{align*}
    \item If  $C \leftarrow E \rightarrow S$,
    \begin{align*}
        &P(Y=y \mid d o(S=s))\\ &\xlongequal{\text{by definition}}P^{(s)}_I(Y=y \mid S=s)\\
        &\xlongequal{\text{given } E\rightarrow S}\sum_{e} P^{(s)}_I(Y=y \mid S=s, E=e) P^{(s)}_I(E=e)\\
        &=\sum_{e} \sum_{c} P^{(s)}_I(Y=y \mid S=s, E=e, C=c) P^{(s)}_I(C=c| S=s, E=e) P^{(s)}_I(E=e)\\
        &\xlongequal{\text{given } (Y\indep \{S,E\} | C) \text{ and } (C\indep S | E) }\sum_{e} \sum_{c} P^{(s)}_I(Y=y \mid C=c) P^{(s)}_I(C=c| E=e) P^{(s)}_I(E=e)\\
        &=\sum_{e} \sum_{c} P(Y=y \mid C=c) P(C=c| E=e) P(E=e)\\
        &=\sum_{e} \sum_{c} P(Y=y \mid C=c,E=e) P(C=c| E=e) P(E=e)\\
        &\xlongequal{}P(Y=y)\\
        \end{align*}
    \end{itemize}
    As $P(Y=y \mid d o(S=s))=P(Y=y)$ holds true for every distribution $\tilde{\mathcal{P}}$, which is invariant \wrt iterative variable $S$. Moreover, we have $P(C=c\mid d o(S=s))=P^{(s)}_I(C=c)=P(C=c)$. This indicates that the intervention on $S$ leave the causal structure $C\rightarrow Y$ untouched. Thus, we have
    \begin{align*}
    \operatorname{Var}\left(\left\{\mathcal{R}(f_{Y}\mid do(s))\mid s \in \Space{S}\right\}\right)&=
    \operatorname{Var}\left(\left\{\Space{E}_{(G,Y)\sim P^{(s)}_I(G,Y),C\subset G}[l(f_{Y}(C),Y)]\mid s \in \Space{S}\right\}\right)\\
    &=
    \operatorname{Var}\left(\left\{\Space{E}_{(C,Y)}[l(f_{Y}(C),Y)]\mid s \in \Space{S}\right\}\right)\\ &=0
    \end{align*}
    Finally, taking the definition of $f$, we have 
    \begin{align*}
        f_{Y}&= \argmin_f \Space{E}_{s\in \Space{S}}\left[\Space{E}_{(G,Y)\sim P^{(s)}_I(G,Y),C\subset G}[l(f(C), Y)]\right]\\
        &= \argmin_f \Space{E}_{s\in \Space{S}}\left[\Space{E}_{(C, Y)}[l(f(C), Y)]\right]
        \\&= \argmin_f \Space{E}_{s\in \Space{S}}[\mathcal{R}(f\mid do(s))]
    \end{align*}
    Hence, $f_Y$ takes the minimum penalty and satisfies the DIR Principle.
\end{proof}

Notably, if $S\rightarrow C$, then $\operatorname{Var}\left(\left\{\mathcal{R}(f_{Y}\mid do(s))\mid s \in \Space{S}\right\}\right)$ may not equal to zero since $c\sim P^{(s)}_I(C|S=s)$. In such case, $f_Y$ is not necessarily satisfied to DIR Principle. That is, although $f_{Y}$ still minimizes $\mathcal{R}(f\mid do(S))$, we can't be sure whether it reaches the lower bound of $\operatorname{Var}\left(\left\{\mathcal{R}(f_{Y}\mid do(s))\mid s \in \Space{S}\right\}\right)$ without knowledge about the specific data distribution. Thus, we only consider the cases of $C\rightarrow S$, $C\indep S$ and $C \leftarrow E \rightarrow S$ in the following discussion.
\begin{theorem}[Uniqueness]
    \label{theorem:uniqueness}
    Suppose $l$ is a strict loss function and there exists one and only one non-trivial subset $C$, then there exists a unique structure equation model $f_Y$ \st it satisfies the DIR Principle.
\end{theorem}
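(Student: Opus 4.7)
The plan is to combine the Necessity Theorem with an argument by contradiction for uniqueness, leveraging the strictness of $l$ and the stated uniqueness of the non-trivial causal subset $C$. First, by Theorem~\ref{theorem:necessity}, the oracle model $f_Y$ operating on the oracle rationale $C$ already satisfies the DIR Principle, so existence of a DIR-satisfying structure equation model is immediate; the entire burden is on uniqueness.

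Suppose for contradiction that a second model $f'$ also satisfies the DIR Principle, taking as input some extracted subset $C' \subseteq G$. I would first argue that the zero-variance condition $\mathrm{Var}_{s}(\{\mathcal{R}(f' \mid do(S=s))\}) = 0$, combined with the simultaneous minimization of the mean $s$-interventional risk, forces the conditional law $P(Y \mid C')$ to be invariant under every $do(S=s)$. The idea is that if $l$ is strict --- interpreted as the property that each conditional Bayes risk $\min_{\hat{y}} \mathbb{E}[l(\hat{y}, Y) \mid C' = c']$ is attained by a unique minimizer --- then simultaneous attainment of the minimum across all $s$-interventional distributions forces $f'(c')$ to equal the Bayes predictor under every intervention, which in turn forces the conditional distribution $Y \mid C' = c'$ to be the same under every $do(S=s)$. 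By the Feature Induction assumption in Appendix~\ref{appendix:assum}, no induced subset of a spurious feature can shield $Y$ from variation in $s$, which upgrades this pointwise invariance to the conditional independence $Y \indep (G \setminus C') \mid C'$ in the sense of the IR Assumption.

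The third step uses the hypothesis that $C$ is the \emph{unique} non-trivial subset enjoying this invariance. Hence $C' = C$, and the optimization collapses on the common domain $C$ to $\min_f \mathbb{E}[l(f(C), Y)]$, whose minimizer is unique again by the strictness of $l$. This yields $f' = f_Y$, completing the uniqueness argument. The main obstacle I anticipate lies in the second step: cleanly passing from pointwise risk equality across $s$-interventional distributions to distributional equality of $Y \mid C'$, and in particular ruling out the degenerate case where a spurious $C'$ yields invariant risk by accident. This requires carefully unpacking how $do(S=s)$ alters the joint distribution of $(C', Y)$ under each of the three dependency regimes $C \indep S$, $C \rightarrow S$, and $C \leftarrow E \rightarrow S$ --- paralleling the three case computations in the proof of Theorem~\ref{theorem:necessity} --- and converting the strictness of $l$ into an identifiability statement on conditional distributions.
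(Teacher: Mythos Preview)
Your plan is not wrong, but it takes a substantially more elaborate route than the paper. The paper reads the hypothesis ``there exists one and only one non-trivial subset $C$'' as directly fixing the input domain of \emph{any} candidate structure equation model: since $C$ is assumed unique, any competitor $f'_Y$ is automatically a function of the same $c$. The paper's entire argument is then a three-line contradiction on that fixed domain: if $f'_Y \neq f_Y$, pick a datum $(g,y)$ where $f'_Y(c) \neq f_Y(c)$; strictness of $l$ gives $l(f'_Y(c),y) > l(f_Y(c),y)$; and since the variance term for $f_Y$ is already zero (from Theorem~\ref{theorem:necessity}) while for $f'_Y$ it is nonnegative, one concludes $\mathcal{R}_{\mathrm{DIR}}(f'_Y) > \mathcal{R}_{\mathrm{DIR}}(f_Y)$, contradicting that $f'_Y$ satisfies the DIR Principle.

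By contrast, you allow the competitor to act on an \emph{a priori} different rationale $C'$ and then try to force $C' = C$ via an identifiability argument (zero variance $+$ risk minimality $\Rightarrow$ invariance of $P(Y\mid C')$ $\Rightarrow$ $C'$ satisfies the IR condition $\Rightarrow$ $C'=C$ by uniqueness). This is more principled in that it explains \emph{why} the domain must be $C$ rather than taking it as given, and your use of the Feature Induction assumption to rule out accidental invariance of spurious $C'$ is exactly the right ingredient. The cost is the obstacle you yourself flag: passing from equal interventional risks to equal conditional laws requires a specific notion of ``strict'' (essentially a strictly proper scoring rule), and you would need to redo the three-case analysis of Theorem~\ref{theorem:necessity}. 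The paper simply sidesteps all of this by invoking the uniqueness hypothesis up front. Your final step --- uniqueness of the minimizer on the fixed domain $C$ via strictness of $l$ --- coincides with the paper's only substantive step.
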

\begin{proof}
    Since $f_Y$ exists and satisfies the DIR Principle, we only need to prove its uniqueness under the given conditions. Otherwise, suppose we have another structure equation $f'_Y\neq f_Y$ satifies the DIR Principle. Specifically, there exists a datum $(g, y)$ \st $f'_Y(c)\neq f_Y(c)$. Thus, we have $l(f'_Y(c), y)> l(f_Y(c), y)$. Given that $\operatorname{Var}\left(\left\{\mathcal{R}(f'_{Y}\mid do(s))\mid s \in \Space{S}\right\}\right)\geq 0=\operatorname{Var}\left(\left\{\mathcal{R}(f_{Y}\mid do(s))\mid s \in \Space{S}\right\}\right)$, we have $\mathcal{R}_{\text{DIR}}(f'_Y) > \mathcal{R}_{\text{DIR}}(f_Y)$. 
\end{proof}

In reality, there could be multiple candidates of $C$, \eg $C_i, C_j$ \st $\mathcal{R}_{\text{DIR}}(f_Y^{(C_i)}) =  \mathcal{R}_{\text{DIR}}(f_Y^{(C_j)})$, where $f_Y^{(C_i)}$ is the structure equation corresponds to $C_i$. Thus, it calls for the selection of $C$ to avoid the learning of suboptimal $f_Y$. Inspired by Occam's Razor, we define 
    \begin{equation}
        C^* = \argmin |C|
    \end{equation} 
as the preferred rationale, or rationale of parsimony.
We argue that rationales are not to be extended beyond necessity, which poses simpler hypotheses about causality. As the search of $C^*$ is NP-hard (the worst time complexity is exponential), we use fixed size for the learned rationales in our experiments and leave a better optimization to future work. 

\begin{corollary}[Necessity and Sufficiency]
    \label{corollary:ns}
    Suppose $l$ is a strict loss function and there exists one and only one non-trival subset $C$, then any structure causal model $f'_Y$ \st it satisfies the DIR Principle \ifff $f'_Y=f_Y$.
\end{corollary}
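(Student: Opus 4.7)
The plan is to derive Corollary~\ref{corollary:ns} as an immediate consequence of Theorem~\ref{theorem:necessity} and Theorem~\ref{theorem:uniqueness}, treating it as a bookkeeping exercise rather than a new derivation. Both directions of the biconditional are already encoded in those results, so the work reduces to aligning their hypotheses with those of the corollary.

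For the sufficiency direction ($f'_Y = f_Y \Rightarrow$ DIR), I would invoke Theorem~\ref{theorem:necessity} directly. That theorem establishes that the oracle $f_Y$ satisfies the DIR Principle under the ambient SCM assumption that $S \to C$ does not exist, so any $f'_Y$ equal to $f_Y$ inherits the property without further argument.

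For the necessity direction (DIR $\Rightarrow f'_Y = f_Y$), I would apply Theorem~\ref{theorem:uniqueness}. Under the corollary's hypotheses---strict loss $l$ and a unique non-trivial subset $C$---that theorem yields uniqueness of the structure equation model satisfying the Principle. Since Theorem~\ref{theorem:necessity} already exhibits $f_Y$ as one such model, any other candidate $f'_Y$ meeting the Principle must coincide with $f_Y$ pointwise; otherwise strictness of $l$ at a datum $(g,y)$ with $f'_Y(c) \neq f_Y(c)$ would force $\mathcal{R}_{\text{DIR}}(f'_Y) > \mathcal{R}_{\text{DIR}}(f_Y)$, contradicting the minimization condition of the Principle.

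The only real obstacle is bookkeeping the hypotheses: the corollary does not explicitly restate Theorem~\ref{theorem:necessity}'s requirement that $S \to C$ be absent, so I would note that this is inherited from the SCM framework of Section~\ref{sec:causal-view}, where only the three relations $C \indep S$, $C \to S$, and $C \leftarrow E \to S$ are admitted. With that caveat made explicit, the two implications together give the iff, and no new calculation is required beyond what is already carried out in the proofs of the two preceding results.
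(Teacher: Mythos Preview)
Your proposal is correct and matches the paper's approach; the paper's own proof is even terser, stating only ``This is directly obtained from Theorem~\ref{theorem:uniqueness},'' since the statement and proof of Theorem~\ref{theorem:uniqueness} already bundle both the existence claim (that $f_Y$ satisfies the DIR Principle, inherited from Theorem~\ref{theorem:necessity}) and the uniqueness claim. Your explicit separation of the two directions and your remark about the implicit $S\to C$ exclusion are sound elaborations, not departures.
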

This is directly obtained from Theorem~\ref{theorem:uniqueness}. Thus, under the unique constraint of $C$, we can approach the oracle $f_Y$ by optimizing the DIR objective, which maintains the invariant causal relation between the causal feature and the response variable $Y$. In another way, based on the uniqueness of the feasible rationale, the optimization of the DIR Principle on the intrinsic interpretable model $h$ (where $C$ is exhibited inside of $h$) pushes the approach to $C$ with rationales $\tilde{C}$. Then, $f_Y$ can also be approached as an invariant predictor based on the learning from $\tilde{C}$.

\section{Setting Details}
\label{appendix:details}

\begin{table}[H]
    \centering
    \caption{\textbf{Statistics of Graph Classification Datasets.}}
    \label{tab:data-gnn-statistics}
    \resizebox{1.0\textwidth}{!}{
        \begin{tabular}{rccccccccccccc}
    \toprule
     &  \multicolumn{3}{c}{Spurious-Motif} &\multicolumn{3}{c}{MNIST-75sp (reduced)}& \multicolumn{3}{c}{Graph-SST2}& \multicolumn{3}{c}{OGBG-Molhiv}\\ 
     & Train & Val & Test & Train & Val & Test & Train & Val & Test& Train & Val & Test\\ \midrule
    Classes\# & &3 & & &10&   &  &2& &&2&\\
    Graphs\#  & 9,000& 3,000& 6,000& 20,000 & 5,000 & 10,000 & 28,327 & 3,147&  12,305 &32,901&4,113&4,113\\
    Avg. N\#  & 25.4 & 26.1 & 88.7& 66.8  & 67.3 & 67.0  & 17.7  & 17.3  & 3.45&25.3& 27.79 & 25.3\\
    Avg. E\#  & 35.4 & 36.2 & 131.1& 539.3 & 545.9 & 540.4 & 33.3 & 33.5 & 4.89&54.1& 61.1 & 55.6\\
    \midrule
    \multirow{2}{*}{Backbone} & \multicolumn{3}{c}{Local Extremum GNN}& \multicolumn{3}{c}{$k$-GNNs}  & \multicolumn{3}{c}{ARMA} & \multicolumn{3}{c}{GIN + Virtual nodes}\\
     & \multicolumn{3}{c}{\citep{LEConv}} & \multicolumn{3}{c}{\citep{Graphconv}} & \multicolumn{3}{c}{\citep{ARMA}}&\multicolumn{3}{c}{\citep{GIN,hu2021ogblsc}}\\
     Neuron\#&  \multicolumn{3}{c}{[4,32,32,32]} & \multicolumn{3}{c}{[5,32,32,32]}  &  \multicolumn{3}{c}{[768,128,128,2]}&  \multicolumn{3}{c}{[9,300,300,300,1]}\\
     Global Pool&  \multicolumn{3}{c}{global mean pool}& \multicolumn{3}{c}{global max pool}   &  \multicolumn{3}{c}{global mean pool}&  \multicolumn{3}{c}{global add pool} \\
     Gen. Type  & \multicolumn{3}{c}{Scale \& Correlation Shift}& \multicolumn{3}{c}{Noise} & \multicolumn{3}{c}{Degree \& Scale Shift}& \multicolumn{3}{c}{/}\\
    \bottomrule
    \end{tabular}}
\end{table}
\paragraph{Datasets} We summarize dataset statistics in Table~\ref{tab:data-gnn-statistics}, and  introduce the node/edge features and the preprocessing in each datasets:
\begin{itemize}[leftmargin=*]
    \item \textbf{Spurious-Motif.} We use random node features and constant edge weights in this dataset.
    \item \textbf{MNIST-75sp.} The nodes in the graphs are superpixels, and node features are the concatenation of pixel intensities (RGB channels) and coordinates of their mass centers. Edges are the spatial distance between the superpixel centers, while we filter the edges with a distance less than 0.1 to make the graphs sparser. 
    \item \textbf{Graph-SST2.} We use constant edge weight and filter the graphs with edges less than three. We initialize the node features by the pre-trained BERT~\citep{bert} word embedding. 
    \item \textbf{OGBG-Molhiv.} We use the official released dataset in our experiment.
\end{itemize}

\paragraph{GNNs.} We summarize the backbone GNNs for each dataset in Table~\ref{tab:data-gnn-statistics}. The number of neurons in the sequent layers (in forwarding order) is reported. We use ReLU as activation layers and different global pooling layers. In OGBG-Molhiv, we adopt one fully connected layer for the prediction layers while using two fully connected layers for the models in other datasets.  For baselines with node pooling/node attention, we add one node pooling/attention layer in the second convolution layer. 

\paragraph{Training Optimization \& Early Stopping.} All experiments are done on a single Tesla V100 SXM2 GPU (32 GB). During training, we use Adam~\citep{adam} optimizer. The maximum number of epochs is 400 for all datasets. We use Stochastic Gradient Descent (SGD) for the optimization on Graph-SST2 and OGBG-Molhiv and Gradient Descent (GD) for the other two datasets. Also, we exhibit early stopping to avoid overfitting of the training dataset. Specifically, in MNIST-75sp, Graph-SST2 and OGBG-Molhiv, each model is evaluated on a holdout in-distribution validation dataset after each epoch. While for Spurious-Motif, we use an unbiased validation dataset (\ie without spurious relations compared to the training dataset). If the model's performance on the validation dataset is without improvement (\ie validation accuracy begins to decrease) for five epochs, we stop the training process to prevent increased generalization error. 

\paragraph{Hyper-Parameter Settings.} We set the causal feature ratio and $\lambda$ as $(r=0.8, \lambda=10^{-4}), (r=0.25,\lambda=10^{-2}), (r=0.6,\lambda=10^{2}), (r=0.8,\lambda=10^{-3})$ for MNIST-75sp, Spurious-Motif, Graph-SST2 and OGBG-Molhiv respectively. For other baselines, we adopt grid search for the best parameters using the validation datasets.
 
\paragraph{Model Selection.} We select each model based on its performance on the corresponding validation dataset. We repeat each experiment at least five times and report the average values and the standard errors in the paper.

\section{Unimodal Adjustment}
\label{sec:element-wise}
We follow \cite{rubi} to demonstrate how the shortcut prediction can help to remove model bias. For clarity, we refer to the model parameters except for $\Phi_2$ as the main branch, \ie except for the $S$-only branch.

Given a house-tree graph as the input graph, we suppose the shortcut prediction $\hat{y}_{\tilde{s}}$ of the tree subgraph leans towards the house class. Then after reweighting $\sigma(\hat{y}_{\tilde{s}})$ on $\hat{y}_{\tilde{c}}$, the softmax readout on the house class in the joint prediction $\hat{y}$ will be magnified, which results in a smaller loss back-propagated to the main branch and prevents $\hat{y}_{\tilde{c}}$ from inductive bias. 

In another situation where a house-wheel graph is given as the input, we similarly suppose the shortcut prediction $\hat{y}_{\tilde{s}}$ of the wheel subgraph leans towards other classes except the house, say, the circle class. Then after reweighting $\sigma(\hat{y}_{\tilde{s}})$ on $\hat{y}_{\tilde{c}}$, the softmax readout on the house class in the joint prediction $\hat{y}$ will be reduced, which results in a larger loss back-propagated to the main branch and encourages the model to learn from these examples.

Furthermore, we offer the causal- and information-theoretical justifications: (1) From the perspective of causal theory \citep{pearl2000causality,pearl2016causal}, the element-wise multiplication enforces the spurious prediction to estimate the pure indirect effect (PIE) of the shortcut features, while the causal prediction captures the natural direct effect (NDE) of the causal patterns \citep{vanderweele2013three}; (2) From the perspective of information theory \citep{kullback1997information}, the element-wise multiplication makes the causal prediction reflect the conditional mutual information between the causal patterns and ground-truths, conditioning on the complement patterns.


\section{More Experimental Results}
\subsection{Visualization}
\label{appendix:vis}
We provide more visualization cases in Graph-SST2 dataset as shown in Figure~\ref{fig:app-vis-1} and Figure~\ref{fig:app-vis-2}. The rationales are highlighted in deep colors.
\begin{figure}[H]
    \centering
    \subcaptionbox{\label{fig:sst2-train-1}}{
        \includegraphics[width=0.45\textwidth]{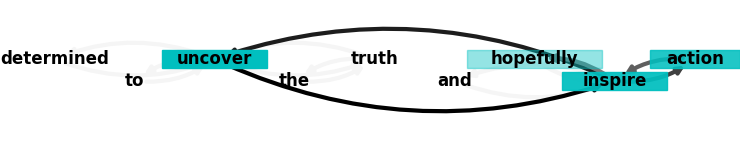}}
    \subcaptionbox{\label{fig:sst2-train-2}}{
        \includegraphics[width=0.45\textwidth]{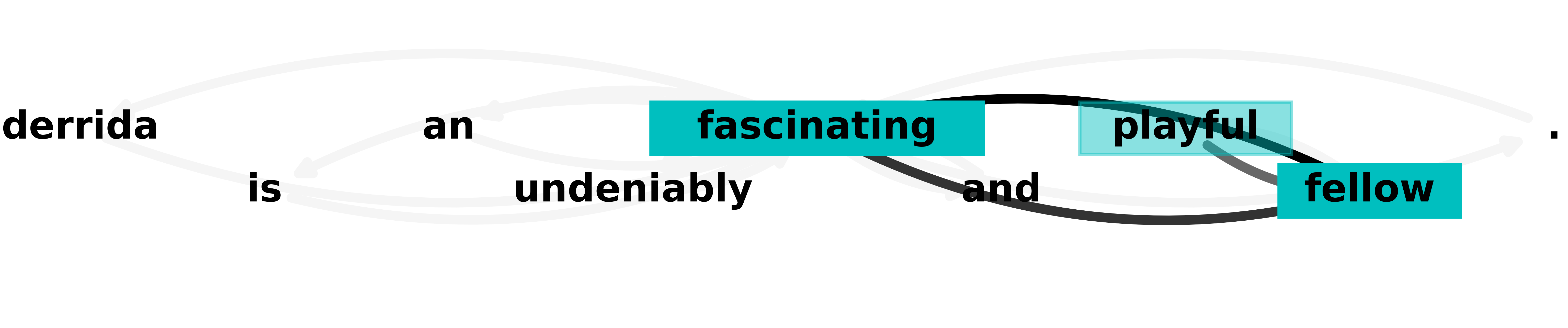}}
    \subcaptionbox{\label{fig:sst2-train-3}}{
        \includegraphics[width=0.45\textwidth]{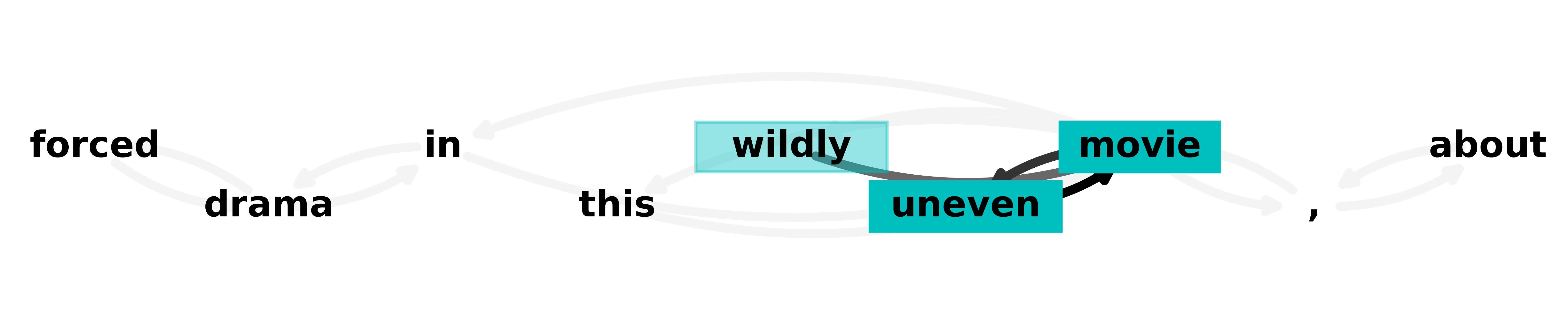}}
    \subcaptionbox{\label{fig:sst2-train-4}}{
        \includegraphics[width=0.45\textwidth]{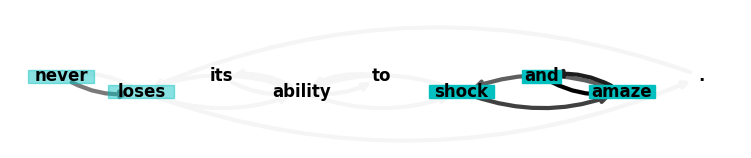}}
    \vspace{-10pt}
    \caption{Visualization of Training Rationales. Each graph represents a comment, \eg, "\textit{determined to uncover the truth and hopefully inspire action}" in (a).}
    \label{fig:app-vis-1}
    \vspace{-10pt}
\end{figure}
\begin{figure}[H]
    \centering
    \subcaptionbox{\label{fig:sst2-test-1}}{
        \includegraphics[width=0.4\textwidth]{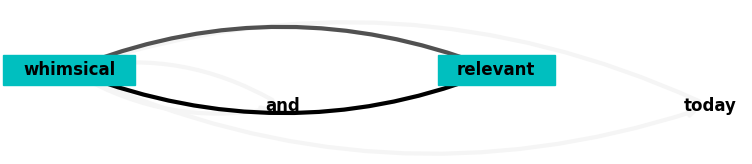}
        }
    \subcaptionbox{\label{fig:sst2-test-2}}{
        \includegraphics[width=0.4\textwidth]{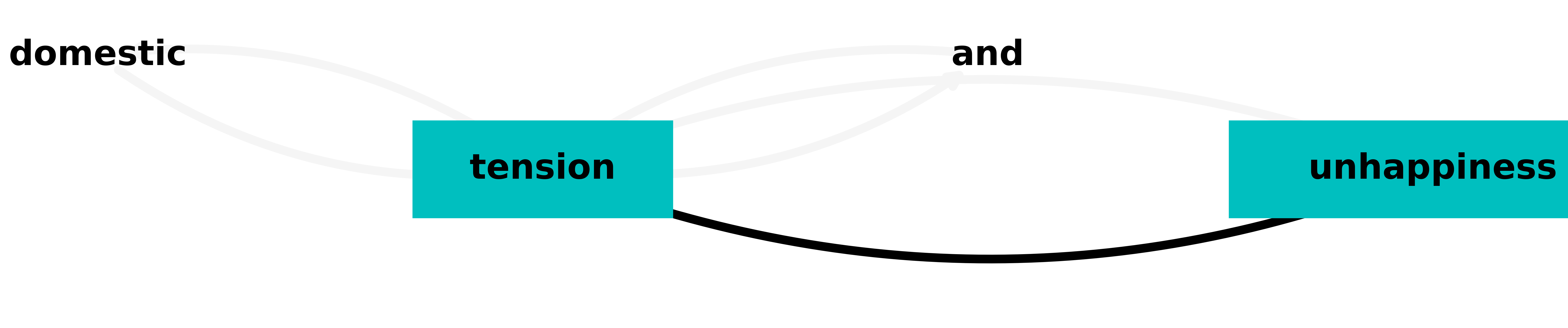}}
    \subcaptionbox{\label{fig:sst2-test-3}}{
        \includegraphics[width=0.4\textwidth]{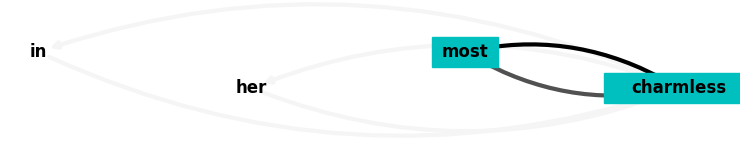}}
    \subcaptionbox{\label{fig:sst2-test-4}}{
        \includegraphics[width=0.4\textwidth]{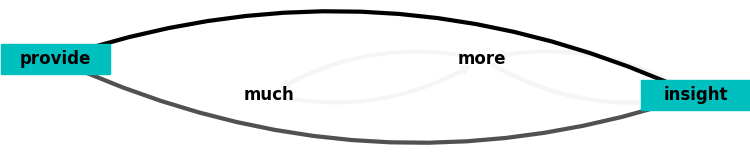}}
    \vspace{-10pt}
    \caption{Visualization of Testing Rationales. Each graph represents a comment, \eg, "\textit{whimsical and relevant today}" in (a).}
    \vspace{-10pt}
    \label{fig:app-vis-2}
\end{figure}

\begin{figure}[H]
    \centering
    \subcaptionbox{Cycle-Tree}{
        \includegraphics[width=0.3\textwidth]{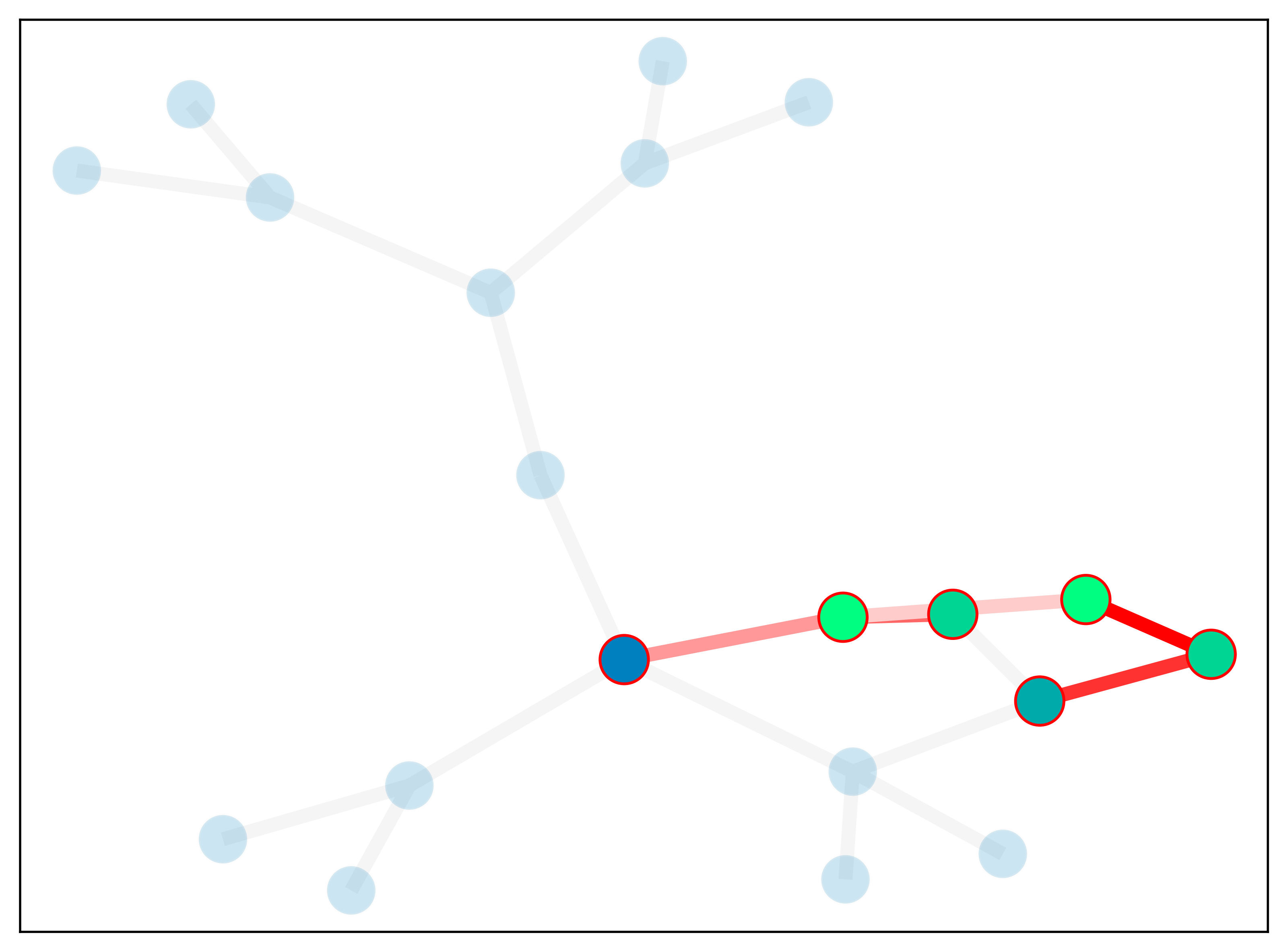}
        }
    \subcaptionbox{House-Ladder}{
        \includegraphics[width=0.3\textwidth]{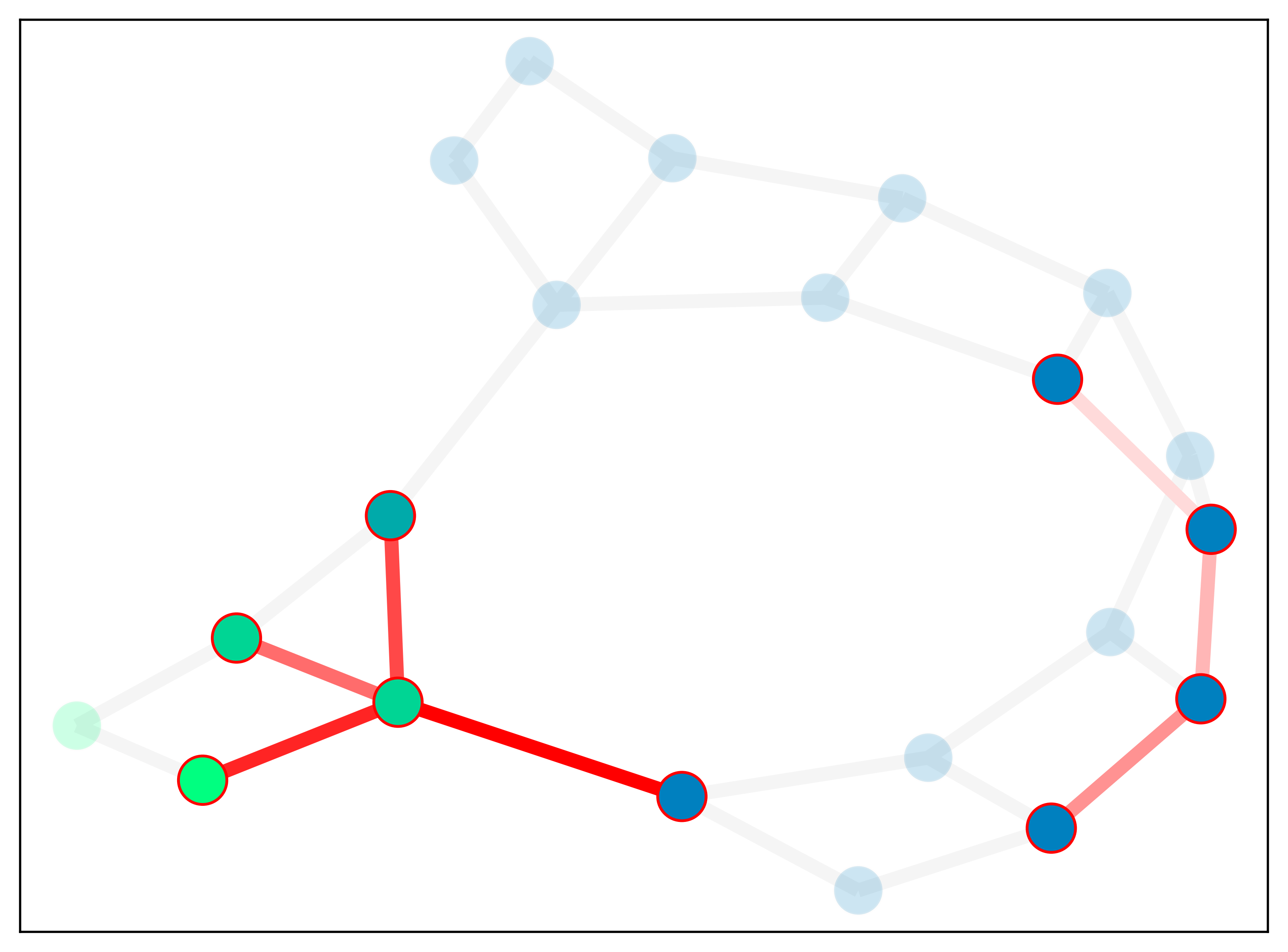}}
    \subcaptionbox{Crane-Tree}{
        \includegraphics[width=0.3\textwidth]{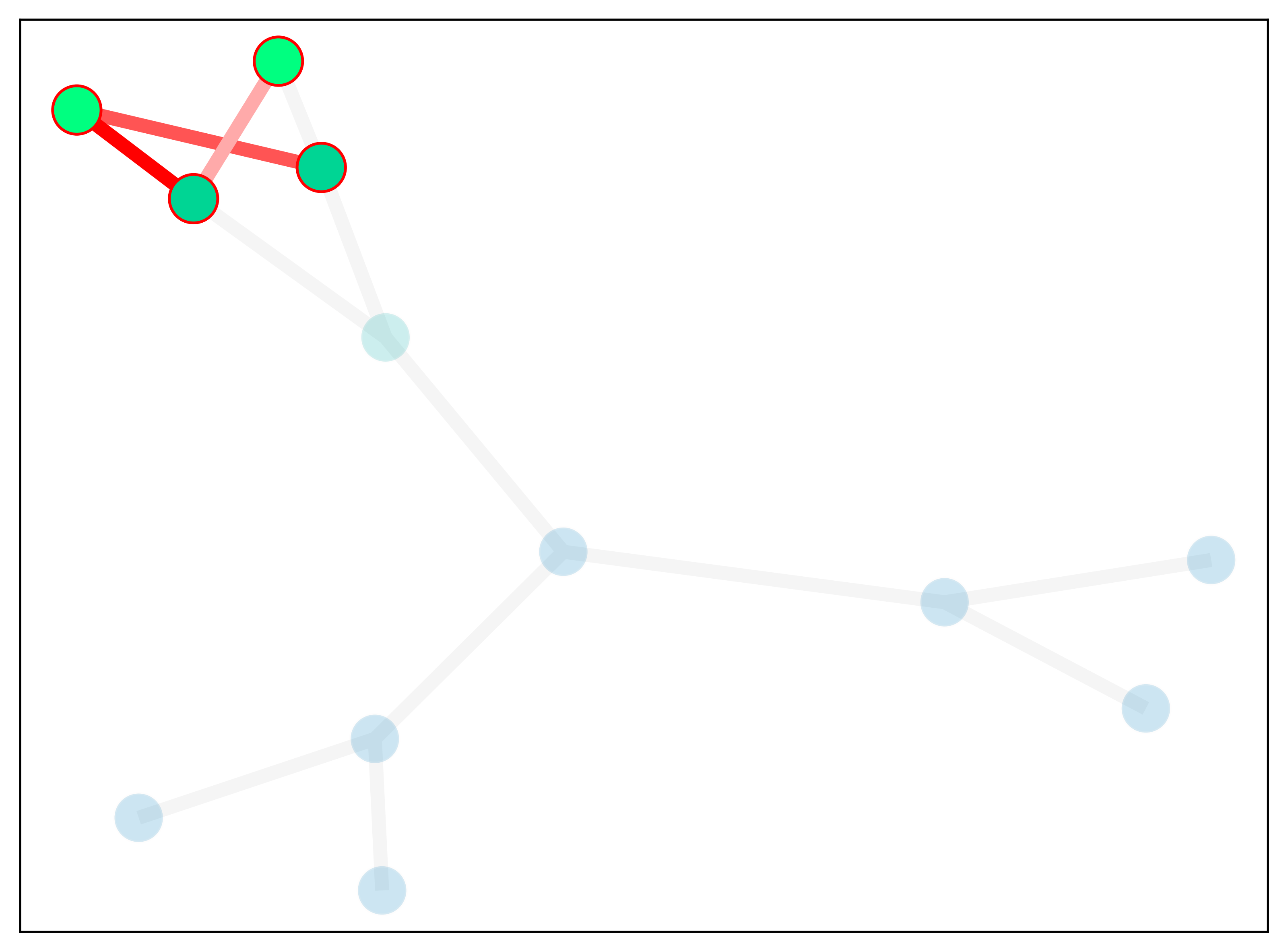}}
    \vspace{-10pt}
    \caption{Visualization of Training Rationales in Spurious-Motif Dataset. Structures with deeper colors mean higher importance. Nodes of ground truth rationales are colored by green.}
    \vspace{-10pt}
\end{figure}
\begin{figure}[H]
    \centering
    \subcaptionbox{Cycle-Tree}{
        \includegraphics[width=0.3\textwidth]{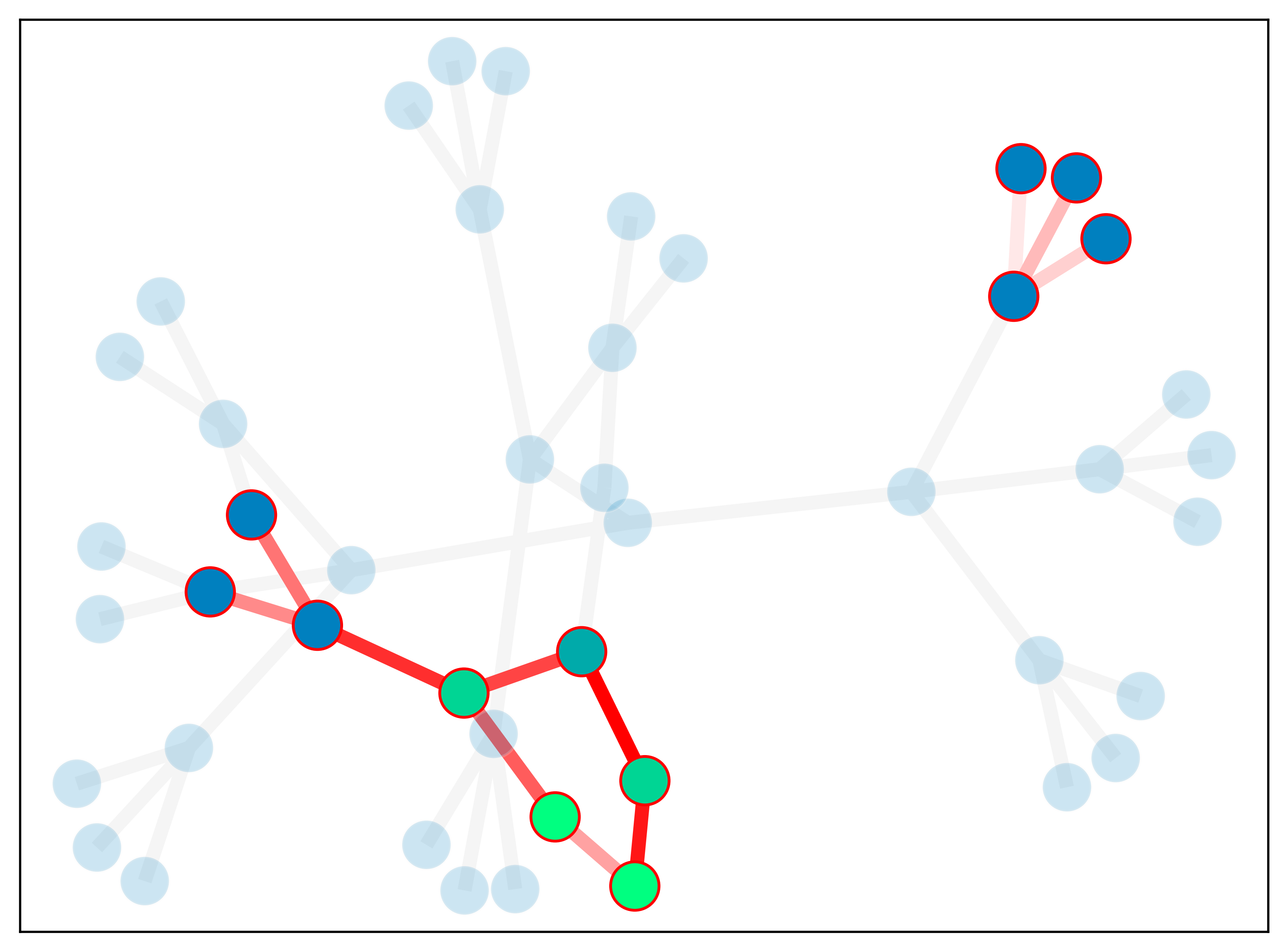}
        }
    \subcaptionbox{House-Ladder}{
        \includegraphics[width=0.3\textwidth]{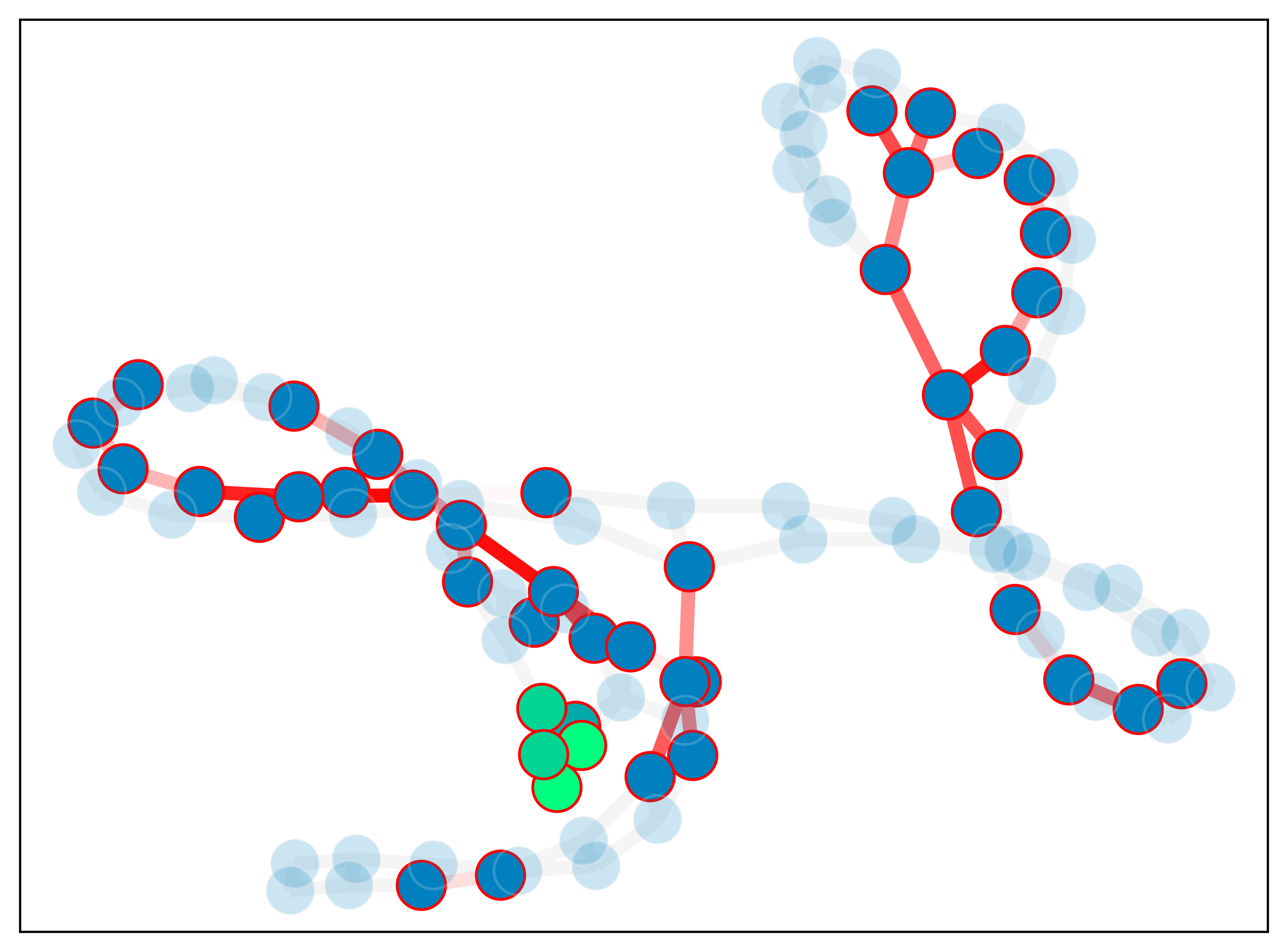}}
    \subcaptionbox{Crane-Wheel}{
        \includegraphics[width=0.3\textwidth]{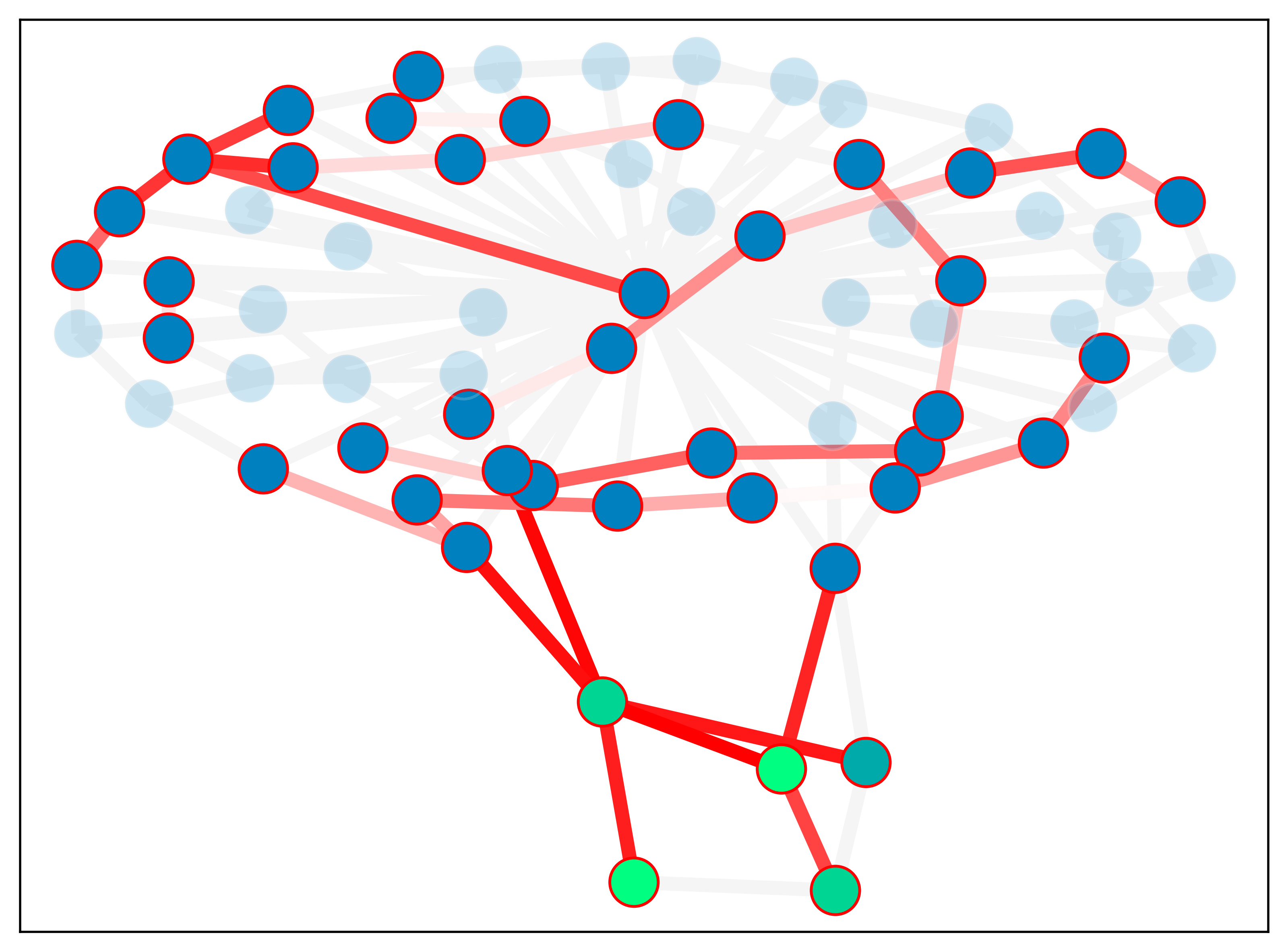}}
    \vspace{-10pt}
    \caption{Visualization of Testing Rationales in Spurious-Motif Dataset. Structures with deeper colors mean higher importance. Nodes of ground truth rationales are colored by green.}
    \vspace{-10pt}
\end{figure}
\subsection{Sensitivity Analysis}
\label{sec:sens}
\begin{figure}[H]
    \centering
    \includegraphics[width=1.0\textwidth]{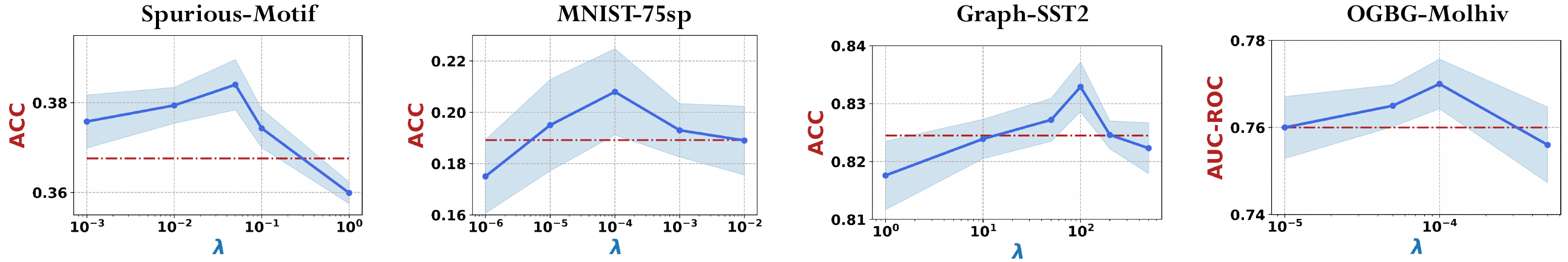}
    \caption{\textbf{Sensitivity of Hyper-Parameter $\lambda$.} In each chart, dash line represents the performance of the best baseline in the corresponding dataset, and the area between ACC$\pm$std are colored.}
    \label{fig:sens}
    \vspace{-10pt}
\end{figure}
We analyze the performance of DIR \wrt the hyper-parameter $\lambda$. As shown in Figure~\ref{fig:sens}, with $\lambda\rightarrow 0$, DIR degrades to optimize the performance in each environment only, without explicitly penalizing the shortcuts' influence on the model predictions. We also see that all testing performances drop sharply if $\lambda$ is too large. Since a large weight on the variance term emphasis on the invariance condition while leading to the overlook on the performance loss, it could fail to exhibit $f_{\tilde{Y}}$ correctly. Notably, such a trade-off in the DIR objective is commonly shared among all the datasets.
\y{
\subsection{Study of the Spurious Classifiers}
Here we provide more observations about the predictions of the learned spurious classifier, which sheds light on the designed model mechanism. We first look into the confidence of predictions and define 
\begin{equation}
\nu=\mathbb{E}_{(g, y) \in \mathcal{O}, \tilde{s}=g / h_{\tilde{C}}(g)} H\left(\text{Softmax}(\hat{y}_{\tilde{s}})\right)
\end{equation}

\begin{table}[t]
    \centering
    \caption{\y{Confidence of the Spurious Predictions. Uniform is the reference indicates the uniform distributions across the classes.}}
    \label{tab:confidence}
    \vspace{-10pt}
    \begin{tabular}{lcccc}
    \toprule
    &\y{Spurious-Motif  ($b$=0.9)} & \y{MNIST-75sp}& \y{GraphSST2} &\y{Molhiv}\\
    \midrule
    \y{Uniform}&\y{1.10}&\y{2.30}&\y{0.693}&\y{0.693}\\
    \y{Spurious Predictions} &\y{0.529}&\y{1.93}&\y{0.265}&\y{0.187}\\ 
    \bottomrule
    \end{tabular}
\end{table}

\begin{table}[t]
    \centering
    \caption{\y{Performance of the Spurious Classifiers. $\Delta\downarrow$ indicates the performance gap of the spurious classifiers and the corresponding causal classifiers.}}
    \label{tab:perf}
    \vspace{-10pt}
    \begin{tabular}{lcccc}
    \toprule
    &\y{Spurious-Motif  ($b$=0.9)} & \y{MNIST-75sp}& \y{GraphSST2} &\y{Molhiv}\\
    \midrule
    \y{Spurious Classifiers}&\y{33.43\std{0.22}}&\y{17.09\std{0.44}}&\y{81.14\std{1.35}}&\y{51.13\std{1.29}}\\
    \y{$\Delta\downarrow$}&\y{6.44}&\y{3.27}&\y{2.15}&\y{25.92}\\ 
    \bottomrule
    \end{tabular}
\end{table}

where $H$ is the entropy function, and a lower $\nu$ indicates higher confidence. We report the results for the trained spurious classifiers in Table~\ref{tab:confidence}. Thus, the results demonstrate the marked tendency of the spurious predictions and validate the design of the $S-$only branch. 

However, we show that spurious classifiers are over-confident and potentially overfit to spurious features, which fails to generalize out-of-distribution. In Table~\ref{tab:perf}, we evaluate the spurious classifiers (taking non-causal features as inputs) on the testing sets. We argue that the performance degradation is caused by (i) feature-level problem: it could be theoretically inadequate to infer the label given the non-causal features, and (ii) paradigm-level problem: minimizing the empirical risk only can hardly exhibit stable relations between the features and labels.

\subsection{Comparison of Post-hoc Explanations and Intrinsic Rationales.}
Here we aim to compare the explanations generated by GNNExplainer~\citep{GNNExplainer} and the rationales exhibited by DIR. Specifically, we generate post-hoc explanations from GNNExplainer for Spurious-Motif, where we use the models trained under ERM as the models to explain. We compute the precision of the explanations in Table~\ref{tab:posthoc}. 

\begin{table}[H]
    \centering
    \caption{\y{Explanation/Rationale Accuracy in Spurious-Motif dataset. The results of DIR is consistent with Table~\ref{tab:main} and we repeat them here for better view.}}
    \label{tab:posthoc}
    \begin{tabular}{lcccc}
    \toprule
    &\y{Balance} &\y{$b$=0.5}& \y{$b$=0.7} &\y{$b$=0.9}\\
    \midrule
    \y{GNNExplainer} &\y{0.249\std{0.011}}& \y{0.203\std{0.019}} &\y{0.167\std{0.039}}& \y{0.066\std{0.007}}\\
    \y{DIR} &\y{0.257\std{0.014}}& \y{0.255\std{0.016}}& \y{0.247\std{0.012}} &\y{0.192\std{0.044}}\\ 
    \bottomrule
    \end{tabular}
\end{table}

The explanations generated by GNNExplainer reflect the models' inner mechanism, which backs that deep models easily learn from data bias (especially when $b$ is large), being at odds with the true reasoning process that underlies the task. Moreover, even when spurious correlations do not exist, the precisions of rationales generated by DIR still outperform the precisions of the post-hoc explanations, showing the effectiveness of DIR when identifying causal features. 
}

\section{Open Discussions}
\label{sec:open}
Based on this work, we provide open discussions and future directions for the research community, which are inspired by the insightful comments of the ICLR reviewers.

\subsection{Expressiveness of Rationale Generators} 

High expressiveness of the rationale generators could be beneficial for the identification of causal features. Therefore, we have offered additional techniques in our implementation to improve the expressiveness of the graph encoder. Specifically, we incorporate distance encoding measures~\citep{LiWWL20} like shortest-path distances as the extra node features for better structural representation learning. Also, more powerful graph encoders like RingGNN~\citep{ChenVCB19} and 3WLGNN~\citep{MaronBSL19} can be used as the graph encoders to distinguish different substructures better.

\subsection{Generalization to Unseen Spurious Patterns}

In our implementation, the memory bank only contains the spurious patterns seen in the training set, while it could possibly fail to unseen spurious patterns. And we provide discussions and solutions to solve this limitation:

\begin{itemize}[leftmargin=*]
\item Attribute level perturbation. When the spurious patterns in the testing are different from those in training set only on the attribute level, we can perturb the node/edge attributes of the subgraphs before intervention. And such perturbation is expected to improve the model's robustness during inference.

\item External knowledge base. When the spurious patterns also change on the structure level, for example, a star-shaped unseen base graph appears in the testing set of the Spurious-Motif, one potential solution is to resort to prior knowledge. We can enrich the memory bank with possible spurious patterns, \eg tree (seen) and star (unseen) base graphs. With the external knowledge base, the model can be trained to recognize these possible spurious patterns and be well generalized to the testing dataset.

\item Subgraph matching. In a more tricky scenario when the external knowledge base is not available, we can integrate our model with subgraph matching algorithms in the inference. For example, we can extract the training rationales into another bank $\tilde{\mathbb{C}}$ and use them to query the testing graphs, \ie checking if similar patterns exist in the testing graphs. The match results may assist the rationale generator in highlighting the causal features and avoiding unseen spurious features.
\end{itemize}

\subsection{Higher Level Interpretability}

The interpretability of GNNs in the feature level implicitly demands the separability of a graph into causal and non-causal features. At the same time, we see cases going beyond such assumption (\cf Appendix~\ref{appendix:assum}). \cm{We believe we could resort to higher level interpretability}. For example,
\begin{itemize}[leftmargin=*]
    \item Interpretability of representations~\citep{wangtan2021,ReduNet}. Instead of highlighting important features for the model decisions, the general goal of representation interpretability answers ``What's the information encoded by the $i$-th element of the embedding in the $j$-th layer?''.
    \item \cm{Interpretations on top of disentangled variables.
    Each disentangled latent variable reveals one independent generative factor in the data \citep{BengioCV13}. By generating importance score on these variables, we could possibly obtain more semantically rich interpretations than feature-level interpretability.}
\end{itemize}

\cm{Wherein, we believe there are fewer constraints on the separability of features. Thus, the models equipped with higher level interpretability could be applied to a broader range of data-generating assumptions.}

\end{document}